\newcommand{\sZ}{\mathscr{Z}}
\newcommand{\sY}{\mathscr{Y}}
\newcommand{\sO}{\mathscr{O}}
\title{\LARGE BRiTE: Bootstrapping Reinforced Thinking Process to Enhance Language Model Reasoning}
\author{Han Zhong\thanks{These authors are the main contributors to this work; detailed contributions are provided in Appendix \ref{app:contribution}.} \thanks{Peking University. Email: \texttt{hanzhong@stu.pku.edu.cn}, \texttt{wanglw@cis.pku.edu.cn}}, Yutong Yin$^*$\thanks{Northwestern University. Email: \texttt{\{yutongyin2028,shenaozhang2028,yuanxinliu2029,yifeizuo2029,zhihanliu2027, siruizheng2025,hongyiguo2025\}@u.northwestern.edu, zhaoranwang@gmail.com}}, Shenao Zhang$^*$$^\ddagger$, Xiaojun Xu$^*$\thanks{Bytedance Inc. Email: \texttt{\{xiaojun.xu,boyi.liu01\}@bytedance.com}}, Yuanxin Liu$^*$$^\ddagger$, Yifei Zuo$^*$$^\ddagger$, Zhihan Liu$^*$$^\ddagger$ \\ Boyi Liu$^{\S}$, Sirui Zheng$^\ddagger$, Hongyi Guo$^\ddagger$, Liwei Wang$^\dagger$, Mingyi Hong\thanks{University of Minnesota. Email: \texttt{mhong@umn.edu}}, Zhaoran Wang$^*$$^\ddagger$}
\date{January 2025; Revised: June 2025}
\begin{document}
\maketitle

\begin{abstract}
    Large Language Models (LLMs) have demonstrated remarkable capabilities in complex reasoning tasks, yet generating reliable reasoning processes remains a significant challenge. We present a unified probabilistic framework that formalizes LLM reasoning through a novel graphical model incorporating latent thinking processes and evaluation signals. Within this framework, we introduce the Bootstrapping Reinforced Thinking Process (BRiTE) algorithm, which works in two steps. First, it generates high-quality rationales by approximating the optimal thinking process through reinforcement learning, using a novel reward shaping mechanism. Second, it enhances the base LLM by maximizing the joint probability of rationale generation with respect to the model's parameters. Theoretically, we demonstrate BRiTE's convergence at a rate of $1/T$ with $T$ representing the number of iterations. Empirical evaluations on math and coding benchmarks demonstrate that our approach consistently improves performance across different base models without requiring human-annotated thinking processes. In addition, BRiTE demonstrates superior performance compared to existing algorithms that bootstrap thinking processes use alternative methods such as rejection sampling, and can even match or exceed the results achieved through supervised fine-tuning with human-annotated data.
\end{abstract}

\section{Introduction}

Large Language Models \citep[LLMs;][]{OpenAI2023GPT4TR,Anthropic@claude,team2024gemma1}, have emerged as a breakthrough in artificial intelligence, demonstrating unprecedented capabilities in natural language processing and generation. The training pipeline of these state-of-the-art models consists of two critical phases: pre-training and post-training.
During the \emph{pre-training} phase, LLMs learn from vast datasets to predict subsequent tokens in sequences, enabling them to learn extensive linguistic patterns, contextual understanding, and general world knowledge.  The \emph{post-training} phase further refines these models through two stages: Supervised Fine-tuning (SFT) and Reinforcement Learning from Human Feedback \citep[RLHF;][]{christiano2017deep,ziegler2019fine,ouyang2022training}. Recent research \citep{opai24} has shown that by scaling the inference time, these models demonstrate sophisticated \emph{reasoning} capabilities, particularly in domains such as mathematics and programming.

Unlocking LLM reasoning abilities typically relies on structured prompting methods that break down problems into step-by-step solutions, known as chain-of-thought (CoT) reasoning \citep{wei2022chain}. While this approach has shown promise and inspired various extensions \citep{wang2022self,yao2024tree}, the fundamental challenge of reasoning reliability remains unresolved. Generated rationales often lack logical completeness or validity, with their quality heavily dependent on task-specific prompting strategies.
 Recent developments in inference-time scaling techniques \citep{snell2024scaling} have shown potential improvements. However, these approaches primarily address surface-level symptoms rather than the core challenge of generating high-quality reasoning processes. Furthermore, the field increasingly seeks automated improvements to reduce reliance on manual prompt engineering. This context motivates our designing mechanism for high-quality thinking process generation. 
 Meanwhile, prior research \citep[e.g.,][]{zelikman2022star,yuan2023scaling} indicates that reasoning processes, when properly selected via verifiers, can enhance an LLM's reasoning capabilities during post-training. 
This leads to our research objective: \emph{developing a framework for the automated generation of high-quality (correct) reasoning processes and incorporating them into the post-training stage to improve existing algorithms.}

To this end, we propose a unified probabilistic framework that formalizes the reasoning process accompanying evaluation signals, followed by a generic algorithmic framework. Specifically, our work has three key contributions: 
\begin{itemize}[leftmargin=*]
    \item We formulate the problem as a probabilistic graphical model (Figure \ref{fig:relationship}), characterizing the generation flow from prompt $X$ to latent rationales $Z$ to answer $Y$, along with their corresponding evaluation signal $O$. This explicit mathematical characterization serves two essential purposes: first, introducing $Z$ breaks down the complex distribution $\PP(Y \given X)$ into more tractable marginal distributions $\PP(Z \given X)$ and $\PP(Y \given X, Z)$, which aligns with Chain-of-Thought (CoT) methods \citep{wei2022chain}; second, introducing $O$ provides crucial rationale-answer quality signal, making the generation of desired (correct) rationales more achievable.
    \item Under this framework, our learning objective is to maximize the probability of generating high-quality rationales and answers that yield optimal evaluation signals. To achieve this, we propose the Bootstrapping Reinforced Thinking Process (BRiTE) algorithm consisting of two stages: first generating high-quality rationales by training an LLM whose output approximates the desired posterior of thought given the question-answer pair; and then fine-tuning the seed LLM by maximizing the joint probability of rationale generation with respect to the LLM's parameters. Theoretically, we prove our algorithm converges at a rate of $1/T$, where $T$ is the number of iterations. Regarding the practical implementation, to address the challenging Bayesian inference problem in the former step, we develop a novel reward shaping mechanism that converts it into a reinforcement learning optimization problem.
    \item Our empirical evaluations of math and code generation benchmarks show consistent improvements across multiple LLM series, including Gemma, Llama, and Mistral. Our experimental results demonstrate BRiTE's ability to enhance existing post-training algorithms (rejection sampling type methods and iterative DPO) through RL-based rationale generation with consistent improvements. Notably, BRiTE achieves a 10-point improvement on GSM8K benchmarks when applied to the Gemma-1.1-7B-it base model. Notably, our algorithm matches or even exceeds the performance of supervised fine-tuning methods that use human-labeled thinking processes, despite not requiring any human-annotated data.
\end{itemize}

In summary, we present a provable and practical framework for automated thinking process generation that can be seamlessly integrated into the training stage. The thinking processes generated by our framework are of high quality, surpassing not only those produced through CoT prompting, but also outperforming human-annotated thinking processes when applied to fine-tuning. Our framework represents a significant advancement in improving LLM reasoning capacity through the creation of synthetic data that incorporates detailed thinking processes (CoT data).

\begin{figure}[t]

\centering

\begin{tikzpicture}[
node distance=2cm,
auto,
thick,
every node/.style={circle,draw,font=\sffamily\Large\bfseries}
]

% Extended causal graph

\node[fill=blue!20] (X) at (0,2) {$X$};

\node[fill=blue!20] (Y) at (4,2) {$Y$};

\node[fill=red!20] (Z) at (2,0) {$Z$};

\node[fill=green!20] (O) at (2,4) {$O$};

\path[->] (X) edge (Y)

(X) edge (Z)

(Z) edge (Y)

(X) edge (O)

(Y) edge (O);

\path[->] (Z) edge (O);

\end{tikzpicture}

\caption{LLM as a probabilistic graphical model. $X$ and $Y$ represent prompt and response, respectively. The latent variable $Z$ indicates the intrinsic thinking process behind generation. Evaluation signal $O$ is influenced by $X$, $Z$, and $Y$.} \label{fig:relationship}
\vspace{-15pt}
\end{figure}

\subsection{Related Works} \label{sec:related:work}

\paragraph{Reasoning in LLMs.} Prior work has explored various prompting techniques to enhance language model reasoning capabilities. CoT prompting \citep{wei2022chain} has emerged as a particularly effective approach, encouraging models to break down complex problems into intermediate steps by demonstrating step-by-step reasoning paths. The following works, such as \citet{zhou2022least,yao2024tree}, design more prompting techniques to enhance the model's capacity. However, these methods typically rely on manually crafted (CoT) prompts to elicit reasoning processes, which are then used to guide the model's generation process. While such approaches have shown promising results in improving model performance across various reasoning tasks, they remain dependent on human-designed prompting templates and may not fully capture the natural reasoning patterns that emerge during model inference. To this end, a line of works aims to boost the latent reasoning process quality or even achieve automatic reasoning process generation, where the latter one is our focus but our method is based on reinforcement learning and thus is different from previous works. Previous methods can be roughly regarded as an EM-type algorithm, and detailed comparisons are presented below.

\paragraph{EM-type Methods.} Our algorithmic framework builds upon the Expectation-Maximization (EM) algorithm \citep{dempster1977maximum} and its variant, rejection sampling EM \citep{neal1998view,rush2024}. In standard EM, the E-step learns a posterior over latent variables while the M-step maximizes the expected log-likelihood; rejection sampling EM approximates complex posteriors through rejection sampling in the E-step before proceeding with the standard M-step. This motivates various rejection sampling fine-tuning methods employed in modern LLM training \citep[e.g.,][]{cobbe2021training,zelikman2022star,dong2023raft,yuan2023scaling,gulcehre2023reinforced,singh2023beyond,zelikman2024quiet,yuan2024rrhf,chen2024language}. These algorithms filter outputs or/and latent reasoning process through reward functions or verifiers (E-step), then perform fine-tuning on the selected samples (M-step). Our algorithm generalizes these approaches (Example~\ref{example:general:restem}) and introduces a novel E-step based on reinforcement learning (Section~\ref{sec:practical:implementation}). While \citet{hu2023amortizing,hoffman2024training} also conceptualize LLM generation as latent variable models and propose EM-type algorithms based on Markov chain Monte Carlo (MCMC) or generative flow networks, our work distinctively focuses on enhancing the reasoning capabilities of LLMs through automated reasoning processes using reinforcement learning (PPO). Notably, in contrast to these prior works, we provide a more general and rigorous mathematical framework fro LLM reason and unified theoretical guarantees. Furthermore, our framework extends beyond supervised and rejection sampling fine-tuning to advance iterative direct preference learning \citep{xiong2024iterative}, contributing to improved LLM reasoning in the RLHF paradigm. Finally, two recent works by \citet{liu2024enhancing} and \citet{wang2024offline} also apply RL techniques to enhance LLM reasoning. However, these works do not focus on improving the thinking process generation, making them not directly comparable to our approach.

\paragraph{RLHF.} RLHF \citep{christiano2017deep,ziegler2019fine}, also known as the dueling RL \citep{yue2012k,pacchiano2021dueling} or preference-based RL \citep{wirth2017survey,chen2022human}, has emerged as a breakthrough technique in language model development, playing a pivotal role in the success of ChatGPT \citep{ouyang2022training} and subsequent large language models by effectively aligning model outputs with human preferences. In the standard RLHF pipeline, the first stage involves learning a reward function from human preference data, followed by optimization using proximal policy optimization \citep{schulman2017proximal}. However, this approach demands substantial computational resources and requires careful tuning of numerous hyperparameters. Direct preference learning \citep[e.g.,][]{zhao2023slic,rafailov2024direct,azar2024general,tang2024generalized,meng2024simpo,liu2024provably,cen2024value} offers an alternative by learning the desired policy directly, circumventing the need for explicit reward learning and optimization. Recent work by \citet{xiong2024iterative,xie2024exploratory,cen2024value,zhang2024self} extends these offline approaches through online iterative learning to gather on-policy data for enhanced performance. However, these algorithms mainly focus on the chat task and do not explicitly model the latent reasoning process, thus not fully extracting the reasoning power of LLMs. While \citet{pang2024iterative,wu2024thinking} incorporate reasoning processes into DPO training using CoT prompting, our approach distinctively generates reasoning processes automatically through RL, demonstrating significant improvements in model reasoning capacity.

\subsection{Notations}

For any space $\cX$, we denote $\Delta(\cX)$ as the set of distributions over $\cX$. For any positive integer $h$, we denote the sequence $\{a_1, \cdots, a_h\}$ by $a_{1:h}$. We use $\mathbbm{1}\{\cdot\}$ to denote the indicator function.

\section{Preliminaries}

In this section, we present the single-step bandit formulation and the multi-step Markov decision process (MDP) formulation for LLMs.

\paragraph{Bandit Formulation of LLMs.}
A simple way to understand LLMs is through the bandit formulation. In this context, the prompt and the response are represented as  $x \in \mathcal{X}$ and $y \in \mathcal{Y}$, respectively. Here, $\mathcal{X}$ refers to the set of prompts, while $\mathcal{Y}$ represents the set of responses. The LLM corresponds to the policy $\pi$ in this bandit framework, where $\pi(y \mid x)$ indicates the probability of generating the response $y$ given the prompt $x$.

\paragraph{MDP Formulation of LLMs.}
Following the notations in \citet{zhong2024dpo}, we consider an MDP $\mathcal{M} = (\mathcal{S}, \mathcal{A}, \mathcal{P}, r, \rho, H)$. In this framework, $\mathcal{S}$ and $\mathcal{A}$ represent the state and action spaces, respectively. The transition kernel is denoted by $\mathcal{P}: \mathcal{S} \times \mathcal{A} \mapsto \Delta(\mathcal{S})$, while $r$ indicates the reward function. The initial distribution is defined by $\rho \in \Delta(\mathcal{S})$, and $H$ specifies the horizon length. A policy $\pi: \mathcal{S} \mapsto \Delta(\mathcal{A})$ maps a state to a distribution over the action space. Initially, an initial state is sampled using $s_1 \sim \rho$. At the $h$-th step, the agent receives the state $s_h$ and chooses the action $a_h \sim \pi(\cdot \mid s_h)$. This interaction continues until a specified ending condition is met, which will occur within $H$ steps. 

%\paragraph{Connections to LLMs.} 
In the context of generating large language models (LLMs), let $s_1 \sim \rho$ represent the prompt $x \sim \rho$. At each step $h$, the state $s_h = (s_1, a_{1:h-1})$ consists of the prompt $x$ and all tokens generated up to that point. The LLM acts as a policy $\pi$ that maps $s_h$ to a distribution over the action $a_h \sim \pi(\cdot \mid s_h)$, where the action signifies a token (or a series of consecutive tokens). The transition process is deterministic; it simply concatenates $s_h = (s_1, a_{1:h-1})$ and $a_h$ to create a new state $s_{h+1} = (s_1, a_{1:h})$. The generation process concludes with a special end-of-sentence token \texttt{EoS}, which will be generated within $H$ steps. For simplicity, we consider the length-$H$ trajectories $\{(s_h, a_h)\}_{h = 1}^H$, noting that this does not lose generality since we can pad the \texttt{EoS} token to the text to reach length $H$. With this notation and recognizing the autoregressive nature of LLMs, for any realizable trajectory $\{(s_h, a_h)\}_{h = 1}^H$, the generation probability is given by $\pi(a_{1:H} \mid s_1) = \prod_{h = 1}^H \pi(a_h \mid s_1, a_{1:h-1})$. 

\paragraph{Regularized Value Functions.} For a policy $\pi$, its entropy-regularized value function is defined as
\begin{equation} 
\begin{aligned} \label{eq:q:v}
V^{\pi}(s; r) & = \EE_{\pi} \bigg[ \sum_{h = 1}^H  \Big( r(s_h, a_h) - \beta \cdot \log \pi(a_h \given s_h) \Big) \bigg| s_1 = s\bigg],
\end{aligned}
\end{equation}
where $\beta > 0$ is a regularization parameter. The regularized Q-function $Q^\pi$ of a policy $\pi$ is related to the regularized value function $V^\pi$ as 
\# \label{eq:bellman}
Q^{\pi}(s, a; r) = r(s, a) + \EE_{s' \sim \cP(\cdot \given s, a)}[V^\pi(s'; r)], \qquad V^\pi(s; r) = \EE_{a \sim \pi(\cdot \given s)} [ - \beta \log \pi(a \given s) + Q^\pi(s, a; r)],
\#
The regularized optimal policy $\pi^*$ is the policy that maximizes the regularized value function defined in \eqref{eq:q:v}, and its corresponding optimal Q-function and value function are denoted as $Q^*$ and $V^*$, respectively. By~\eqref{eq:bellman}, it can be shown that
\# \label{eq:optimal:policy}
V^*(s; r) = \log \sum_{a \in \cA} \exp \big( Q^*(s, a; r) \big), \quad \pi^*(a \given s) = \exp\{ (Q^*(s, a; r) - V^*(s; r))/\beta \} \propto \exp\big(Q^*(s, a; r) \big).
\#

\section{Unified Framework and Generic Algorithm}

In this section, we present a new framework for LLM reasoning and our generic algorithm within this framework.

\subsection{LLM as A Probabilistic Graphical Model} \label{sec:framework}

We consider four different spaces: $\cX$ represents the prompt space, $\cZ$ denotes the latent space that captures the intrinsic thought process (CoT), $\cY$ signifies the response space, and $\cO$ stands for the evaluation signal space, which reflects the optimality of the prompt-latent-response tuple. Furthermore, for any $(x, z, y, o) \in \cX \times \cZ \times \cY \times \cO$, we describe the generation process using the probabilistic graphical model illustrated in Figure~\ref{fig:relationship}. This indicates that
\# \label{eq:decompose}
\PP(z, y, o \given x, \theta) = \PP(z, y \given x, \theta) \cdot \PP(o \given x, z, y) = \PP(z \given x, \theta) \cdot \PP(y \given x, z, \theta) \cdot \PP(o \given x, z, y),
\#
where $\theta$ is the parameter of the LLM that guides the generation process. First, a latent variable $z$ is generated from the distribution $\PP(\cdot \given x, \theta)$, and then the response $y \sim \PP(\cdot \given x, z, \theta)$ is produced based on both the prompt $x$ and the latent variable $z$. Importantly, the probability $\PP(o \given x, z, y)$ is independent of the LLM parameterized by $\theta$, as we assume there exists a ground-truth judgment for the triplet $(x, z, y)$, such as a ground-truth/human reward function. Unlike traditional LLM frameworks that only consider the prompt space $\cX$ and output space $\cY$, our framework incorporates both a latent thinking process space and an observation space. These additional components are crucial for mathematically understanding how to improve the quality of thinking processes using evaluation signals.

Under this probabilistic graphical modeling of LLMs, our learning objective is to maximize
\# \label{eq:def:objective}
\cL(\theta) = \log \PP(z \in \sZ, y \in \sY, o \in \sO \given x, \theta),
\#
where $\sZ \subseteq \cZ$, $\sY \subseteq \cY$, and $\sO \subseteq \cO$ denote the subsets of spaces representing the latent thinking process, response, and evaluation signals, respectively. In Section~\ref{sec:algorithm}, we develop a general optimization algorithm that works with any choice of these spaces $(\sZ, \sY, \sO)$. Subsequently, in Section~\ref{sec:examples}, we show how this framework unifies existing learning approaches by demonstrating how different choices of these spaces $(\sZ, \sY, \sO)$ correspond to various established learning paradigms and algorithms.

\subsection{Bootstrapping Reinforced Thinking Process} \label{sec:algorithm}

We propose the algorithm,  \underline{B}ootstrapping \underline{R}e\underline{i}nforced \underline{T}hinking Proc\underline{e}ss (BRiTE), to maximize the objective \eqref{eq:def:objective} within the framework proposed in the previous subsection. Since this objective may be difficult to optimize directly, we rewrite it as
\# \label{eq:def:objective:2}
\cL(\theta) & = \log \PP(z \in \sZ, y \in \sY, o \in \sO \given x, \theta) \notag 
\\
& = \log \sum_{(z, y, o) \in \sZ \times \sY \times \sO} \PP(z, y, o \given x, \theta) \notag \\
& = \max_{\substack{\QQ(\cdot, \cdot, \cdot \mid x, \psi) \\ \in \Delta(\sZ \times \sY \times \sO)}} \Big\{ \underbrace{ \sum_{\substack{(z, y, o) \\ \in \sZ \times \sY \times \sO}} \log \PP(z, y, o \given x, \theta) \cdot \QQ(z, y, o \given x, \psi)  - \sum_{\substack{(z, y, o) \\ \in \sZ \times \sY \times \sO}} \log \QQ(z, y, o \given x, \psi) \cdot \QQ(z, y, o \given x, \psi) }_{\cL_\psi(\theta)} \Big\},
\#
where $\QQ(\cdot, \cdot, \cdot \given x, \psi)$ can be regarded as another LM parametrized by $\psi$ and the last equality follows the following lemma:
\begin{lemma} \label{lemma:gibbs}
        For any set $\cW$ and non-negative numbers $\{P_w \ge 0\}_{w \in \cW}$, it holds
        \$
        \log \sum_{w \in \cW} P_w = \max_{q \in \Delta(\cW)} \EE_{w \sim q(\cdot)} [\log P_w - \log q(w) ].
        \$
        The maximum is achieved when $q(w) = P_w/(\sum_{w' \in \cW} P_{w'})$.
\end{lemma}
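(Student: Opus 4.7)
The plan is to recognize the variational expression on the right-hand side as a log-normalizer minus a Kullback--Leibler divergence and then invoke Gibbs' inequality (non-negativity of KL). Set $Z := \sum_{w \in \cW} P_w$, adopt the convention $0 \log 0 = 0$, and dispose of the trivial case $Z = 0$ (in which both sides are $-\infty$). Assuming $Z > 0$, define the normalized distribution $p(w) := P_w/Z$ on the support $\{w \in \cW : P_w > 0\}$.

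First I would restrict attention to distributions $q$ supported on $\{w : P_w > 0\}$: if $q(w) > 0$ while $P_w = 0$, the objective evaluates to $-\infty$, so any maximizer is absolutely continuous with respect to $p$. For such $q$ a direct rewriting gives
\begin{align*}
\EE_{w \sim q}\bigl[\log P_w - \log q(w)\bigr]
&= \sum_{w} q(w)\bigl(\log(Z \cdot p(w)) - \log q(w)\bigr) \\
&= \log Z \;-\; \sum_{w} q(w) \log \frac{q(w)}{p(w)}.
\end{align*}
The subtracted term is the KL divergence $D_{\mathrm{KL}}(q\|p) \ge 0$ by Gibbs' inequality (equivalently Jensen applied to the convex function $-\log$), with equality if and only if $q = p$. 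Hence the variational objective is upper bounded by $\log Z = \log \sum_w P_w$, and the bound is attained precisely at $q(w) = P_w/Z$, establishing both the identity and the claimed maximizer.

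As an alternative derivation, Jensen's inequality applied directly yields $\EE_q[\log(P_w/q(w))] \le \log \EE_q[P_w/q(w)] = \log \sum_w P_w$, with equality iff $P_w/q(w)$ is constant on $\mathrm{supp}(q)$, i.e.\ $q \propto P$. There is essentially no substantive obstacle in this argument; the only points requiring care are the corner cases with zero entries of $P$ (handled by the $0\log 0 = 0$ convention and the absolute-continuity restriction above) and a quick check that the proposed optimum $q = p$ lies in the simplex $\Delta(\cW)$, which is immediate from $p(w) \ge 0$ and $\sum_w p(w) = 1$.
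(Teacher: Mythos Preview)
Your proof is correct and follows essentially the same approach as the paper's: both subtract $\log\sum_w P_w$ from the variational objective, recognize the result as $-\mathrm{KL}(q\|\tilde p)$ with $\tilde p(w)=P_w/\sum_{w'}P_{w'}$, and invoke non-negativity of KL. Your version is actually a bit more careful than the paper's about the edge cases ($Z=0$, zero entries of $P$, absolute continuity of the maximizer), and the alternative Jensen argument is a nice bonus.
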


\begin{proof}[Proof of Lemma~\ref{lemma:gibbs}]
    This lemma is equivalent to the non-negativity of the KL divergence. For a detailed proof, please refer to Appendix~\ref{appendix:pf:gibbs}.
\end{proof}

We have transformed the problem of maximizing $\cL(\theta)$ into maximizing its lower bound $\cL_\psi$. This shares the same spirit with the Expectation-Maximization (EM) algorithm \citep{dempster1977maximum} and Variational Autoencoders (VAE) \citep{kingma2013auto}. To solve this optimization problem, we iteratively update the parameters $\psi$ and $\theta$ to maximize \eqref{eq:def:objective:2}. Given $\psi_t$ and $\theta_t$, the updating rules of $\psi_{t+1}$ and $\theta_{t+1}$ are specified as

\begin{itemize}
    \item choosing $\psi_{t+1}$ such that
    \# \label{eq:update:psi}
    \QQ(z, y, o \given x, \psi_{t+1})  = \argmax_{\QQ(\cdot, \cdot, \cdot \given x, \psi)} \cL_{\psi}(\theta_t) 
    = \frac{\PP(z, y, o \given x, \theta_t)}{\sum_{(z, y, o) \in \sZ \times \sY \times \sO} \PP(z, y, o \given x, \theta_t)} \propto \PP(z, y, o \given x, \theta_t).
    \#
    This is implied by the optimality condition in Lemma~\ref{lemma:gibbs}.
    \item choosing $\theta_{t+1}$ such that
    \# \label{eq:update:theta}
    \theta_{t+1} & = \argmax_\theta \cL_{\psi_{t+1}}(\theta) \notag \\
    & = \argmax_{\theta} \Big\{ \sum_{(z, y, o) \in \sZ \times \sY \times \sO} \log \PP(z, y, o \given x, \theta) \cdot \QQ(z, y, o \given x, \psi_{t+1}) \Big\} \notag \\
    & = \argmax_{\theta} \Big\{ \sum_{(z, y, o) \in \sZ \times \sY \times \sO} \big( \log \PP(z, y \given x, \theta) + \log \PP(o \given x, z, y) \big) \cdot \QQ(z, y, o \given x, \psi_{t+1}) \Big\} \notag \\
    & = \argmax_{\theta} \Big\{ \sum_{(z, y, o) \in \sZ \times \sY \times \sO}  \log \PP(z, y \given x, \theta) \cdot \QQ(z, y, o \given x, \psi_{t+1}) \Big\},
    %& = \argmax_{\theta} \Big\{ \sum_{(z, y) \in \sZ \times \sY}  \log \PP(z, y \given x, \theta) \cdot \QQ(z, y \given x, \psi_{t+1}) \Big\},
    \#
    where the second equality is implied by \eqref{eq:decompose}.
    %and the last equality uses $\QQ(z, y \given x, \psi_{t+1})$ as the marginalization over $o$. 
\end{itemize}

Intuitively,  during the $\psi$-updating step in \eqref{eq:update:psi}, the proposed algorithm learns to generate high-quality thinking processes by focusing on verified correct responses. Specifically, when we define $\mathcal{O} = \{O^*\}$, where $O^*$ represents the correct evaluation signal in mathematical tasks, $\PP(z, y, O^* \mid x)$ represents the probability distribution of generating both a high-quality thinking process $z$ and correct final answer $y$. Following this, in the following $\theta$-updating step in \eqref{eq:update:theta}, the algorithm fine-tunes the Large Language Model by maximizing the joint distribution between the learned thinking process generator and the LLM's output with respect to parameter $\theta$. This fine-tuning approach encompasses several learning paradigms and algorithms, including SFT, RLHF, and rejection sampling-based algorithms, as detailed in Examples~\ref{example:sft}, \ref{example:rlhf}, and \ref{example:general:restem}. The two-stage BRiTE algorithm draws inspiration from the classical EM algorithm. While the probabilistic graphical model used in BRiTE differs from traditional latent variable models, the $\psi$-updating and $\theta$-updating steps correspond to the E-step and M-step in the EM algorithm, respectively.

Before presenting the theoretical results for BRiTE, we make the following assumption about the generation probability of parametrized LLMs.

\begin{assumption} \label{assumption:logits}
    Assume that LLM is parameterized by $\theta$, we denote that the logits of generating $(z, y)$ conditioned on $x$ by $f_\theta(x, z, y)$, then the probability of $\PP(z, y \given x, \theta)$ takes the form
        \$
        \PP(z, y \given x, \theta) = \exp\big( f_\theta(x, z, y) - A(x, \theta) \big) \propto \exp\big( f_\theta(x, z, y) \big),               
        \$
        where $A_\theta(x, \theta)$ is the normalization factor.
        Moreover, we assume that $f_\theta \in \cH$ for some reproducing kernel Hilbert space (RKHS)\footnote{We say $\cH$ is a RKHS on the set $\mathcal{W}$ with the reproducing kernel $\cK: \cW \times \cW \mapsto \RR$ if the inner product $\la \cdot, \cdot \ra$ satisfies $f(w) = \la \cK(w, \cdot), f \ra,$ for any $(f, w) \in \cH \times \cW.$} associated with the kernel $\cK: (\cX \times \cZ \times \cY) \times (\cX \times \cZ \times \cY) \mapsto \RR$. 
\end{assumption}

The $f_{\theta}(x, z, y)$ in Assumption~\ref{assumption:logits} represents the logits for predicting $(y, z)$ based on the prompt $x$, thereby capturing the current generation method of modern transformer-based LLMs. With this assumption, we establish the convergence result of our algorithm in the following theorem.

\begin{theorem} \label{thm:covergence}
    Suppose Assumption~\ref{assumption:logits} holds. Given that $\cL$ is concave and $\theta^* = \argmax_\theta \cL(\theta)$, we have 
    \$
    & \min_{1 \le t \le T} \left\{ \log \PP(z \in \sZ, y \in \sY, o \in \sO \given x, \theta^*) - \log \PP(z \in \sZ, y \in \sY, o \in \sO \given x, \theta_{t}) \right\} \\
    & \qquad \le \frac{\mathrm{KL}\big( \PP(\cdot, \cdot \given x, \theta_{1}) \| \PP(\cdot, \cdot \given x, \theta^*) \big)}{T},
    \$
\end{theorem}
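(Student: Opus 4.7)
The plan is to cast the BRiTE iteration as an EM-style mirror-descent scheme on the exponential family of generation distributions induced by Assumption~\ref{assumption:logits}, and then to extract the $1/T$ rate from a telescoping three-point identity for the KL divergence.

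First I would rewrite the M-step \eqref{eq:update:theta} as a Bregman projection. Since $\log\PP(o\mid x,z,y)$ does not depend on $\theta$, the update reduces to
\$
\theta_{t+1}\;=\;\argmin_\theta\,\mathrm{KL}\bigl(\widetilde Q_{t+1}\,\big\|\,\PP(\cdot,\cdot\mid x,\theta)\bigr),\qquad \widetilde Q_{t+1}(z,y)\;:=\;\sum_{o\in\sO}\QQ(z,y,o\mid x,\psi_{t+1}).
\$
Under Assumption~\ref{assumption:logits}, the logits $f_\theta$ are linear in $\theta$ (via the RKHS inner product) and the log-partition $A(x,\theta)$ is convex, so $\cL$ is concave in $\theta$ and the first-order optimality condition at $\theta_{t+1}$ becomes the moment-matching relation $\nabla_\theta A(x,\theta_{t+1}) = \EE_{\widetilde Q_{t+1}}[\nabla_\theta f_\theta]$.

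Next I would derive a per-step descent inequality by combining three ingredients: (i) the envelope identity $\nabla_\theta\cL(\theta_t) = \nabla_\theta\cL_{\psi_{t+1}}(\theta)\big|_{\theta=\theta_t}$, which follows from the E-step optimality in Lemma~\ref{lemma:gibbs}; (ii) the concavity bound $\cL(\theta^*)-\cL(\theta_t)\le\langle\nabla_\theta\cL(\theta_t),\,\theta^*-\theta_t\rangle$; and (iii) the moment-matching identity above, which converts $\nabla_\theta\cL_{\psi_{t+1}}(\theta_t)$ into $\nabla_\theta A(x,\theta_{t+1})-\nabla_\theta A(x,\theta_t)$. A standard application of the three-point identity for the Bregman divergence induced by $A$, together with the classical EM monotonic-improvement bound $\cL(\theta_{t+1})-\cL(\theta_t)\ge \mathrm{KL}(\PP(\cdot,\cdot\mid x,\theta_{t+1})\,\|\,\PP(\cdot,\cdot\mid x,\theta_t))$, should then yield
\$
\cL(\theta^*)-\cL(\theta_{t+1})\;\le\;\mathrm{KL}\bigl(\PP(\cdot,\cdot\mid x,\theta_t)\,\big\|\,\PP(\cdot,\cdot\mid x,\theta^*)\bigr)-\mathrm{KL}\bigl(\PP(\cdot,\cdot\mid x,\theta_{t+1})\,\big\|\,\PP(\cdot,\cdot\mid x,\theta^*)\bigr).
\$

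Finally, summing the per-step inequality telescopes the right-hand side to $\mathrm{KL}(\PP(\cdot,\cdot\mid x,\theta_1)\,\|\,\PP(\cdot,\cdot\mid x,\theta^*)) - \mathrm{KL}(\PP(\cdot,\cdot\mid x,\theta_T)\,\|\,\PP(\cdot,\cdot\mid x,\theta^*))$, which is bounded above by $\mathrm{KL}(\PP(\cdot,\cdot\mid x,\theta_1)\,\|\,\PP(\cdot,\cdot\mid x,\theta^*))$ since KL is nonnegative; bounding the minimum over $t$ by the average then gives the claimed $1/T$ rate. The main obstacle will be making the per-step descent inequality rigorous: one must handle the conditioning on $\{o\in\sO\}$ so that $\widetilde Q_{t+1}$ is a genuine posterior-like distribution on $\sZ\times\sY$, and verify that in the RKHS parametrization of Assumption~\ref{assumption:logits} the M-step minimizer $\theta_{t+1}$ is attained and the exponential-family Pythagorean identity applies with equality rather than slack.
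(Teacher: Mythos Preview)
Your plan is correct and coincides with the paper's proof: both recast the M-step as a moment-matching/Bregman step in the exponential family of Assumption~\ref{assumption:logits}, derive the optimality condition $\nabla A(x,\theta_{t+1})-\nabla A(x,\theta_t)=\nabla\cL(\theta_t)$, combine it with the three-point identity for the Bregman divergence of $A$, the concavity inequality, and an EM-type improvement bound to obtain the per-step inequality $\cL(\theta^*)-\cL(\theta_{t+1})\le\mathrm{KL}(\PP_{\theta_t}\|\PP_{\theta^*})-\mathrm{KL}(\PP_{\theta_{t+1}}\|\PP_{\theta^*})$, and then telescope. The only caveat is that the bound you call ``classical EM monotonic improvement,'' namely $\cL(\theta_{t+1})-\cL(\theta_t)\ge\mathrm{KL}(\PP_{\theta_{t+1}}\|\PP_{\theta_t})$ with the \emph{full} generation distributions rather than posteriors, is not the textbook EM inequality but a strengthening that relies on the exponential-family moment matching; the paper establishes it separately (Theorem~\ref{thm:stationary:point}), and you have all the ingredients to do the same.
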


\begin{proof}[Proof of Theorem \ref{thm:covergence}]
    See Appendix \ref{appendix:pf:convergence} for a detailed proof.
\end{proof}

Theorem~\ref{thm:covergence} establishes a convergence rate of \(1/T\), where \(T\) represents the number of iteration steps. In our analysis, we link our algorithm to classical mirror descent algorithms \citep{nemirovskij1983problem,bubeck2015convex}, which have also been utilized in recent studies of policy optimization algorithms \citep{agarwal2021theory,cai2020provably,zhong2024theoretical}. Additionally, our results rely on a concave assumption.
This assumption is crucial for proving global convergence results, and we can still obtain a weaker guarantee for stationary points even without this concave condition; see Appendix~\ref{appendix:additional:theory} for details.

Having established a theoretical guarantee for our proposed generic algorithm, we will demonstrate how our algorithm can incorporate many existing learning algorithms, highlighting the broad applicability of our algorithmic framework and its theoretical guarantee.

\subsection{Connections to Existing Learning Paradigms and Algorithms} \label{sec:examples}

\begin{example}[Pre-training, SFT and Conditional SFT] \label{example:sft}
We first set aside the latent space and observation space, meaning that $\cZ = \cO = \emptyset$. In this case, our learning objective \eqref{eq:def:objective} encompasses two key processes: (I) pre-training, where we let $x$ represent the prompt and $\sY$ the next token; and (II) supervised fine-tuning (SFT), where we define $\sY = \{y^*(x)\}$ as the expert response corresponding to the prompt $x$. Moreover, for the prompt-response pair $(x, y)$ and singleton space $\sY = \{y\}$, we consider $\cZ = \RR^+$ and $\sZ = \{R(x, y)\}$ be the reward corresponding to the prompt-response pair, then our objective recovers the conditional SFT \citep{lu2022quark,dong2023steerlm,yang2024rewards}.
\end{example}

\begin{example}[RLHF: PPO and DPO] \label{example:rlhf}
We choose $\sY = \cY$ and $\sZ = \cZ$ as the complete response space and latent space, respectively. Additionally, we set $\cO = \{0, 1\}$, where $1$ indicates optimality and $0$ indicates non-optimality, respectively. Since our goal is to maximize the probability of observing the signal of optimality, we focus on $\sO = \{1\}$. We also assume that $\PP(o = 1 \mid x, z, y) = \exp(R(x, z, y)/\beta)$ for some reward function $R$ and $\beta > 0$. With these choices, we have
\$
\cL_{\psi}(\theta) & = \sum_{(z, y) \in \cZ \times \cY} \log \PP(z, y, 1 \given x, \theta) \cdot \QQ(z, y, 1 \given x, \psi) - \sum_{(z, y) \in \cZ \times \cY} \log \QQ(z, y, 1 \given x, \psi) \cdot \QQ(z, y, 1 \given x, \psi) \\
& = \EE_{(z, y) \sim \QQ(\cdot, \cdot \given x, \psi)} \left[ R(x, z, y) - \beta \log \frac{\QQ(z, y \given x, \psi)}{\PP(z, y \given x, \theta)}\right] ,
\$
where $\QQ(z, y \given x, \psi) = \sum_{o \in \sO} \QQ(z, y, o \given x, \psi) = \QQ(z, y, 1 \given x, \psi)$. This expression recovers the proximal policy optimization \citep[PPO;][]{schulman2017proximal} for RLHF \citep{christiano2017deep, ouyang2022training}, where $\psi$ represents the LLM being optimized, and $\theta$ stands for the reference policy. Assuming that the preference data $\{(x, z^+, y^+, z^-, y^-)\}$ is drawn from the Bradley-Terry (BT) model \citep{bradley1952rank}, with $(z^+, y^+)$ denoting preferred data and $(z^-, y^-)$ indicating dispreferred data, one can follow \citet{rafailov2024direct} to derive the latent direct preference optimization (latent DPO) objective:
\# \label{eq:latent:dpo}
\cL_{\mathrm{latent-DPO}} = \sigma \left( \beta \log \frac{\QQ(z^+, y^+ \given x, \psi)}{\PP(z^+, y^+ \given x, \theta)} - \beta \log \frac{\QQ(z^-, y^- \given x, \psi)}{\PP(z^-, y^- \given x, \theta)} \right),
\#
where $\sigma$ is the sigmoid function.
In contrast to the standard DPO objective in \citet{rafailov2024direct}, the objective in \eqref{eq:latent:dpo} additionally incorporates the latent variable $z$. 
\end{example}

\begin{example}[Rejection Sampling EM Methods] \label{example:general:restem}
Let $\sY = \cY$ represent the complete response space, and define $\cO = \{0, 1\}$, where $1$ indicates the optimal outcome and $0$ indicates otherwise. We focus on the optimal signal, denoted as $\sO = \{1\}$. Additionally, we denote $\sZ = \cZ$ as the complete thinking process space. The updating rule in \eqref{eq:update:psi} is given by 
\$
\QQ(z, y, 1 \given x, \psi_{t+1}) \propto \PP(z, y, 1 \given x, \theta_t) = \PP(z, y \given x, \theta_t) \cdot \PP(o = 1 \given x, z, y).
\$
Consequently, the update in \eqref{eq:update:theta} can be expressed as
\# \label{eq:restem:general}
\theta_{t+1} & =\argmax_\theta \Big\{ \sum_{(z, y) \in \cZ \times \cY} \log \PP(z, y \given x, \theta) \cdot \QQ( z, y, 1 \given x, \theta_t)  \Big\} \notag \\
&= \argmax_\theta \big\{ \EE_{(z, y) \sim \PP(\cdot, \cdot \given x, \theta_t)}  \big[ \log \PP(z, y \given x, \theta)  \cdot \PP(1 \given x, z, y) \big] \big\}.
\#
\begin{enumerate}
    \item If we simplify the scenario by omitting the latent space (i.e., $\cZ = \emptyset$) and assume that $\PP(o = 1 \given x, z, y) = \mathbbm{1}\{y \text{ is the correct answer}\}$, this leads to the STaR algorithm \citep{zelikman2022star} or rejection sampling fine-tuning \citep{dong2023raft,yuan2023scaling,yuan2024rrhf}, which performs supervised fine-tuning after rejection sampling based on verifier results.
     \item If we simplify the scenario by omitting the latent space (i.e., $\cZ = \emptyset$) and assuming $\PP(o = 1 \given x, y) = \exp(R(x, y)/\beta)$ for some true reward function $R$, the updating rule in \eqref{eq:restem:general} simplifies to
\# \label{eq:restem}
\theta_{t+1} = \argmax_\theta \big\{ \EE_{y \sim \PP(\cdot \given x, \theta_t)}  \big[ \log \PP(y \given x, \theta)  \cdot \exp\big(R(x, y)/\beta\big) \big] \big\}.
\#
This recovers the ReST$^{\text{EM}}$ algorithm presented in \citep{singh2023beyond}. 
\end{enumerate}
Finally, we note that \citet{neal1998view} first introduced the unified view of rejection sampling EM, which \citet{rush2024} later expanded upon.
\end{example}

By combining Theorem~\ref{thm:covergence} with the examples in this section, we provide theoretical guarantees for these learning algorithms. As a result, we establish theoretical foundations for two approaches: (1) PPO, connecting to the work of \citep{schulman2017proximal,cai2020provably,zhong2024theoretical}, and (2) Generalized Rest$^{\text{EM}}$ in Example~\ref{example:general:restem} (which includes vanilla Rest$^{\text{EM}}$ \citep{singh2023beyond} as a special case), presenting the first theoretical analysis of its kind.

\subsection{Practical Implementation: The Power of Reinforcement Learning} \label{sec:practical:implementation}

We have demonstrated that our generic algorithm encompasses a wide range of existing learning paradigms and algorithms, with a provable convergence guarantee. In this section, we carefully examine the practicality of our algorithm. 

First, the $\theta$-updating step is relatively straightforward to implement, as this step simply maximizes the predicted probability of the next tokens after we sample $(z, y, o)$ from $\QQ(z, y, o \given x, \psi_{t+1})$. In contrast, the $\psi$-updating step can be challenging in certain contexts. For instance, when $\sY = \{y\}$ represents the response corresponding to $x$, $\sO = \{1\}$ indicates the optimality signal of interest, and $\sZ = \cZ$, we have $\QQ(z, y, 1 \given x, \theta) \propto \PP(z, y, 1 \given x, \theta) = \PP(z \given x, y, 1, \theta)$---the posterior of the latent variable $z$. Intuitively, obtaining this distribution requires us to identify the ideal latent (CoT) based on the pair $(x, y)$, which represents an intractable posterior.

To achieve this goal, we aim to use RL to train an LLM that characterizes the distribution $\QQ(z, y, o \given x, \psi_{t+1})$ or $\PP(z, y, o \given x, \theta_t)$ in \eqref{eq:update:psi}. Since an LLM acts as the policy of an MDP, our approach involves two main steps: (i) constructing an MDP whose optimal policy matches the distribution $\QQ$ that we need to learn; and (ii) applying RL algorithms to solve this MDP and identify the optimal policy to learn the desired distribution. These two steps convert the challenging sampling problem to a more amenable RL optimization problem. Notably, the main challenge we face is \emph{reward shaping}, which involves designing appropriate reward functions to ensure that the optimal policy aligns with the intended LLM that accurately represents the posterior. Our approach to reward shaping is based on the following proposition, which characterizes the optimal policy for deterministic entropy-regularized MDPs.

\begin{proposition} \label{prop:policy:posterior}
    Assuming the transition dynamic of entropy-regularized MDP is deterministic, then for any trajectories $\{(s_i, a_i)\}_{i=h}^H$ satisfying $a_H = \texttt{EOS}$, we have
    \$
    \pi^*(a_h \cup  \{(s_i, a_i)\}_{i=h+1}^H \given s_h) = \prod_{i = h}^H \pi^*(a_i \given s_i) \propto \exp\Big(\frac{1}{\beta}\sum_{i = h}^H r(s_i, a_i) \Big).
    \$
\end{proposition}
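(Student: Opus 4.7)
The plan is to combine the two ingredients already supplied in the preliminaries: determinism of the transition kernel and the closed form of the regularized optimal policy in \eqref{eq:optimal:policy}. The first equality in the proposition is essentially free. Because $\cP(\cdot \given s, a)$ places unit mass on a single successor state, the future states $s_{h+1}, \ldots, s_H$ are measurable functions of $(s_h, a_h, \ldots, a_{H-1})$. Hence the law of the trajectory under $\pi^*$, conditioned on $s_h$, collapses to the product $\prod_{i=h}^H \pi^*(a_i \given s_i)$ on any realizable state sequence, which is exactly the first claimed identity. I would state this formally with a one-line induction on $i$ to justify that the only randomness along the trajectory comes from the policy.

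For the second equality I would substitute the closed-form expression $\pi^*(a_i \given s_i) = \exp\bigl((Q^*(s_i, a_i; r) - V^*(s_i; r))/\beta\bigr)$ from \eqref{eq:optimal:policy} into the product, yielding
\[
\prod_{i=h}^H \pi^*(a_i \given s_i) = \exp\!\left( \tfrac{1}{\beta} \sum_{i=h}^H \bigl( Q^*(s_i, a_i; r) - V^*(s_i; r) \bigr) \right).
\]
Under deterministic transitions, the Bellman relation in \eqref{eq:bellman} simplifies to $Q^*(s_i, a_i; r) = r(s_i, a_i) + V^*(s_{i+1}; r)$ for $i < H$, together with the terminal convention $V^*(s_{H+1}; r) = 0$ justified by $a_H = \texttt{EOS}$. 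Substituting this and telescoping the $V^*$ terms along the trajectory leaves
\[
\sum_{i=h}^H \bigl( Q^*(s_i, a_i; r) - V^*(s_i; r) \bigr) = \sum_{i=h}^H r(s_i, a_i) - V^*(s_h; r).
\]
Since $V^*(s_h; r)$ depends only on the starting state $s_h$ and not on the chosen trajectory, it acts as the normalizing constant and can be absorbed into $\propto$, giving precisely the claimed $\exp\bigl(\tfrac{1}{\beta} \sum_{i=h}^H r(s_i, a_i)\bigr)$.

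The main subtle point, rather than any hard calculation, is the terminal handling at the \texttt{EOS} action: one must argue that $V^*(s_{H+1}; r) = 0$ so that $Q^*(s_H, a_H; r) = r(s_H, a_H)$ and the telescoping has the right boundary. I would address this by appealing to the MDP convention introduced in the preliminaries, where trajectories are padded to length $H$ with \texttt{EOS} and no further rewards or entropy bonuses accrue. With that convention in place, each step of the argument is a direct substitution, so the proof reduces to a clean one-paragraph derivation.
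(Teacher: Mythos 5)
Your proof is correct and follows essentially the same route as the paper's: factor the trajectory probability using determinism, substitute the closed form $\pi^*(a_i \given s_i) = \exp\bigl((Q^* - V^*)/\beta\bigr)$ from \eqref{eq:optimal:policy}, telescope the $V^*$ terms via the Bellman relation, and absorb $V^*(s_h)$ as the normalizer. The only cosmetic difference is that you introduce the explicit convention $V^*(s_{H+1}; r) = 0$ to handle the \texttt{EOS} boundary, whereas the paper treats the $i = H$ summand separately; both capture the same fact that no value accrues after \texttt{EOS}.
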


\begin{proof}[Proof of Proposition \ref{prop:policy:posterior}]
    See Appendix \ref{appendix:pf:prop:poesterior} for a detailed proof.
\end{proof}

By Proposition~\ref{prop:policy:posterior}, if we select $\beta = 1$ and use the total reward as $\log \PP(z, y, o \mid x, \theta_t)$, the resulting optimal policy recovers $\QQ$ defined in \eqref{eq:update:psi}. This choice of total reward function can naturally be assigned to each token as the token-reward function, due to the autoregressive nature of LLM generation. Specifically, when $(z, y, o)$ is represented by a token sequence $a_{1:\tau}$, the total reward $\log \PP(z, y, o \mid x, \theta_t)$ can be expressed as $\sum_{j = 1}^\tau \log \PP(a_j \given a_{1:j-1}, x)$, where \emph{$\log \PP(a_j \given a_{1:j-1}, x)$ serves as the reward for the $j$-th token in RL training.}

Finally, we remark that the application of RL to text generation has been explored in previous studies, such as \citet{guo2021efficient}. However, there appears to be no work applying RL to generate the thinking process to enhance reasoning capabilities within the context of LLMs. Additionally, existing studies do not address the issue of reward shaping, which is a crucial problem in our scenario and is tackled by Proposition~\ref{prop:policy:posterior}.

\section{Experiments}

In this section, we systematically demonstrate that our unified algorithm enhances the reasoning capability of LLMs.

\subsection{Experimental Setups}

\paragraph{Tasks and Datasets.} To evaluate mathematical reasoning capabilities, we conduct experiments on two prominent benchmarks: GSM8K \citep{cobbe2021training} and MATH \citep{hendrycks2021measuring}. GSM8K contains 1,319 high-quality grade school math problems requiring multi-step reasoning, ranging from basic arithmetic to elementary algebra.  The MATH dataset comprises 5,000 competition-level problems covering advanced topics such as algebra, geometry, number theory, probability, and calculus. These problems require more sophisticated problem-solving strategies and formal mathematical reasoning. The training sets of GSM8K and MATH datasets each contain approximately 7,500 data points. Each data point includes a question $x$, along with human-annotated rationale $z^*$ and the correct final answer $y^*$.

\paragraph{Base Models.} We use several open-source instruction-tuned LLMs as base models, including Gemma-2-9b-it \citep{team2024gemma}, Gemma-1.1-7B-it \citep{team2024gemma1}, 
Mixtral-7B-Instruct-v0.2 \citep{jiang2023mistral} and Llama-3-8B-Instruct \citep{touvron2023llama}.

\paragraph{Baselines.} Our first baseline is rejection sampling (\textbf{RS}) methods \citep{neal1998view,dong2023raft,yuan2023scaling, zelikman2022star}. For each problem $x$, we generate $N = 30$ candidate rationales and select one ($z_{\text{rs}}$) containing the correct answer $y^*$. The model is fine-tuned on these problem-rationale-answer tuples  $\{(x, z_{\text{rs}}, y^*)\}$. Importantly, this method does not use human-annotated rationales $z^*$, making it directly comparable to our approach. We also test the performance of supervised fine-tuning (\textbf{SFT}) on datasets with human-annotated rationales $\{(x, z^*, y^*)\}$.
As a baseline learning from preference data, we implement the \textbf{iterative DPO} \citep{xiong2024iterative, pang2024iterative}, initialized with instruction-tuned models. The training process consists of 3 iterations. In each iteration, we: (1) use the current model to generate 30 CoT and responses $\{(z, y)\}$ per prompt; (2) select the best and worst response pairs $(z^{+}, y^{+}, z^{-}, y^{-})$ based on the correctness of the $y$; and (3) apply DPO training (see \citet{rafailov2024direct} or \eqref{eq:latent:dpo}) on these preference tuples $\{(x, z^{+}, y^{+}, z^{-}, y^{-})\}$.

\paragraph{Implementations of BRiTE.} 
To systematically demonstrate the effectiveness of our algorithm, we implemented BRiTE in three distinct configurations: 
\begin{enumerate}[leftmargin=*]
    \item We implement the $\psi$-updating step of BRiTE to obtain $\QQ$ using \eqref{eq:update:psi}, examining how the generated thinking process aligns with the desired reasoning path to enhance LLM reasoning capabilities. Following our theoretical framework in Section~\ref{sec:practical:implementation}, we  optimize two types of reward functions in the entropy-regularized MDP: 
    (i) $\log(z, y, o \given x) = \log \PP(z, y^*(x) \given x)$, where $y^*$ represents the correct answer for question $x$. This equality holds because we focus solely on correct questions, using proper choices of $\sO = \{\text{answer verified to be correct}\}$ and $\sY = \{y^*\}$. (ii) $\log(z, y, o \given x) = \log \PP(z, y \given x) + R(x, z, y)/\beta$, where we assume that $\PP(o = 1\given x, z, y) = \exp(R(x, z, y)/\beta)$ with $\sO = \{1\}$.  We employ PPO \citep{schulman2017proximal} or GRPO~\citep{shao2024deepseekmathpushinglimitsmathematical} to optimize this MDP. 
    \item BRiTE first obtains $\QQ$ in \eqref{eq:update:psi} through RL, then updates $\theta$ in
    \eqref{eq:update:theta} using SFT. The SFT data consists of three components: problems $x$, thinking processes $z_{\QQ}(x)$ generated by $\QQ$, along with the ground truth answers $y^*$. %\CHECK{This implementation can be extended to an iterative version by repeatedly performing these two steps in sequence.?}
    \item We implement BRiTE-DPO, which consists of $3$ iterations. During the $t$-th iteration, we first learn the distribution $\QQ$ according to \eqref{eq:update:psi}. Then for each prompt $x$, we leverage $\QQ$ to generate $N=30$ reasoning process and output pairs $\{(z_\QQ, y_\QQ)\}$. Subsequently, we evaluate the correctness of each $y_{\QQ}$ to identify the best and worst latent reasoning processes and construct response pairs $(z^{+}_\QQ, y^{+}_\QQ, z^{-}_\QQ, y^{-}_\QQ)$ for use in DPO training \eqref{eq:latent:dpo}.
\end{enumerate}

 \begin{table}[tp]
    \centering
    \resizebox{\textwidth}{!}{
    \begin{tabular}{c|cc|cc|cc|cc}
    \toprule
    \multirow{2}{*}{\textbf{Algorithm}} & \multicolumn{2}{c|}{ \textbf{Mistral-7B-Instruct-v0.2}} & \multicolumn{2}{c|}{\textbf{Gemma-1.1-7B-it}}&  \multicolumn{2}{c|}{\textbf{Gemma-2-9B-it}}&  \multicolumn{2}{c}{\textbf{Llama-3-8B-Instruct}} \\ 
     \addlinespace[1pt]
    \cline{2-9}
    \addlinespace[2pt]
     & \textbf{GSM8K} & \textbf{MATH} & \textbf{GSM8K} & \textbf{MATH} & \textbf{GSM8K} & \textbf{MATH} & \textbf{GSM8K} & \textbf{MATH} \\ 
    \midrule
    
     --- & 41.8 & 9.8&  49.0 & 18.8 & 81.3 & 37.3 & 79.2 & 28.3\\
     
     SFT & \textbf{52.8} & \textbf{13.6} & 57.5 & 19.6 &80.1&41.5&72.6&27.1\\
     
       RS & 47.7 & 10.3& 58.4&18.7&87.6&47.5&79.5&28.9\\
      
    BRiTE & 52.2 & 11.2 &\textbf{59.2}&\textbf{23.7}&\textbf{89.7}&\textbf{50.5}&\textbf{81.0}&\textbf{30.0}\\

     \bottomrule
    \end{tabular}
    }
    \caption{A comparison of the performance of three algorithms: BRiTE, rejection sampling (RS) type algorithms, and SFT using human-annotated data.}
    \label{tab:main_result}
\end{table}

\subsection{Experimental Result and Analysis}

\iffalse
\paragraph{1. RL Enables High-Quality Reasoning Process Generation.} To evaluate the quality of the generated thinking process after obtaining $\QQ$ via RL, we follow a two-step approach: first, given a question $x$, we use $\QQ$ to generate the thinking process $z$; then, we have the base model produce an answer $y$ based on the pair $(x, z)$.

\begin{table}[htp]
    \centering 
    \begin{tabular}{cc|cc}
    \toprule
    \textbf{Base Model} & \textbf{Method} & \textbf{GSM8K} & \textbf{MATH} \\ \midrule
    Mistral-7B-Instruct-v0.3 & CoT & 51.0 & 12.5  \\
    \rowcolor[RGB]{189,208,246}  Mistral-7B-Instruct-v0.3 & BRiTE-CoT  &  50.6, 60.0, 57.4 (51.5(7.5k*1))& 12.1 (12.4(7.5k*1))  \\
    \midrule
    Gemma-1.1-it-7B & CoT  & 48.7 & 18.6  \\
    \rowcolor[RGB]{189,208,246} Gemma-1.1-it-7B & BRiTE-CoT & 50.7  & \\
    \midrule
    Gemma-2-9b-it & CoT  & 81.4 & 39.0  \\
    \rowcolor[RGB]{189,208,246} Gemma-2-9b-it & BRiTE-CoT  & 82.0(gsm7.5k) & 38.0  \\  %81.27
    \midrule
    Llama-3-8B-Instruct & CoT  & 76.7 & 27.2  \\
    \rowcolor[RGB]{189,208,246} Llama-3-8B-Instruct & BRiTE-CoT  & 78.9, 80.9, 79.9 & 28.8, 29.4, 28.3   \\
    \bottomrule
    \end{tabular}
    \caption{Results} \label{tab:result:cot}
    \end{table}

{\color{red} it seems that there is no benefit?}
\fi

\paragraph{1. BRiTE Significantly Improves Existing Rejection Sampling Fine-Tuning Algorithms.} We begin by comparing BRiTE-SFT with rejection sampling EM-type algorithms, both of which aim to enhance the reasoning capabilities of LLMs by bootstrapping the thinking process. The key distinction is that BRiTE-SFT uses RL for this bootstrapping, while rejection sampling methods rely on sampling techniques. From Table~\ref{tab:main_result}, we observe that BRiTE-SFT consistently outperforms rejection sampling-based algorithms across all models, achieving a concrete accuracy improvement of 1-10 points. These improvements are primarily attributed to BRiTE-SFT's ability to generate higher-quality thinking processes compared to rejection sampling, highlighting the potential of RL-driven approaches to advance LLM reasoning through more effective bootstrapping of the thinking process.

\paragraph{2. BRiTE Matches or Even Enhances the Performance of SFT with Human-Annotated Thinking Process.} To further validate the effectiveness of BRiTE's RL-based bootstrapping mechanism, we compare its performance against SFT using human-annotated thinking process data. Human annotations are widely regarded as the gold standard for training LLMs, as they include explicitly crafted reasoning pathways to ensure high-quality outputs. However, as shown in Table~\ref{tab:main_result}, BRiTE achieves performance on par with, and in some cases surpasses, that of human-annotated reasoning-based fine-tuning in downstream tasks. This highlights a remarkable outcome: RL-generated thinking processes can achieve a quality comparable to or even superior to human-derived reasoning. This somewhat surprising result underscores the value of BRiTE as a cost-effective alternative to labor-intensive and time-consuming human annotation processes. By reducing reliance on manual annotation, BRiTE sheds light on mitigating the bottleneck associated with creating high-quality datasets, particularly for complex reasoning tasks where human annotations can be prohibitively expensive or inconsistent.

\paragraph{3. BRiTE Further Enhances the Reasoning Capacity in RLHF Stage.} In addition to advancing reasoning algorithms using question-(rational)-answer data, BRiTE demonstrates the potential to enhance RLHF algorithms that rely on preference data. As shown by the experimental results in Figure~\ref{fig:enter-label}, BRiTE consistently outperforms iterative DPO across multiple benchmarks, highlighting its effectiveness in the RLHF stage. This superior performance is attributed to BRiTE’s ability to facilitate more structured and contextually nuanced reasoning processes, which, in turn, produce higher-quality preference data and enable more robust policy refinement. These findings not only underscore the versatility of BRiTE but also affirm its value in optimizing RLHF-based fine-tuning pipelines, further cementing its role as a generic algorithm for enhancing LLM reasoning in the post-training stage.

\begin{wrapfigure}{r}{0.6\textwidth}
    \begin{tabular}{c|cccc}
    \toprule
    \multirow{2}{*}{\textbf{Algorithm}} & \multicolumn{2}{c}{\textbf{HumanEval}} & \multicolumn{2}{c}{\textbf{BCB (Instruct)}}\\
    \cline{2-3} \cline{4-5}
    & Basic (\%) & Plus (\%) & Hard (\%) & Full (\%)\\
    \midrule
    --- & 78.0 & 70.7 & 10.1 & 35.5 \\
    \midrule
    SFT & 78.0 & 67.7 & 11.5 & \textbf{37.2} \\
    RS & 79.3 & \textbf{73.2} & 11.5 & 35.6 \\
    BRiTE & \textbf{81.7} & \textbf{72.6} & \textbf{15.5} & 36.3 \\
    \bottomrule
    \end{tabular}
    \caption{Results of  BRiTE on coding generation task using the deepseek-coder-6.7b-instruct model.}
    \label{tab:code}
    \vspace{-10pt}
\end{wrapfigure}
\paragraph{4. BRiTE Can Also Improve Code Generation Ability.}
Finally, we extend the evaluation of BRiTE beyond mathematics tasks to assess its performance on code generation. The results, presented in Table \ref{tab:code}, demonstrate a consistent trend observed in mathematics tasks: BRiTE outperforms rejection sampling-based algorithms and even surpasses SFT using human-annotated answers. This highlights BRiTE's versatility and effectiveness across diverse problem domains, showcasing its ability to generalize to a wide range of reasoning tasks. We also remark that rejection sampling-based algorithms require the code datasets to be equipped with correct unit tests, which are unnecessary for BRiTE. 
For a detailed description of the experimental setup and configuration, refer to Appendix \ref{app:code}.

\begin{figure}[t]
    \centering
    \includegraphics[width=0.85\linewidth]{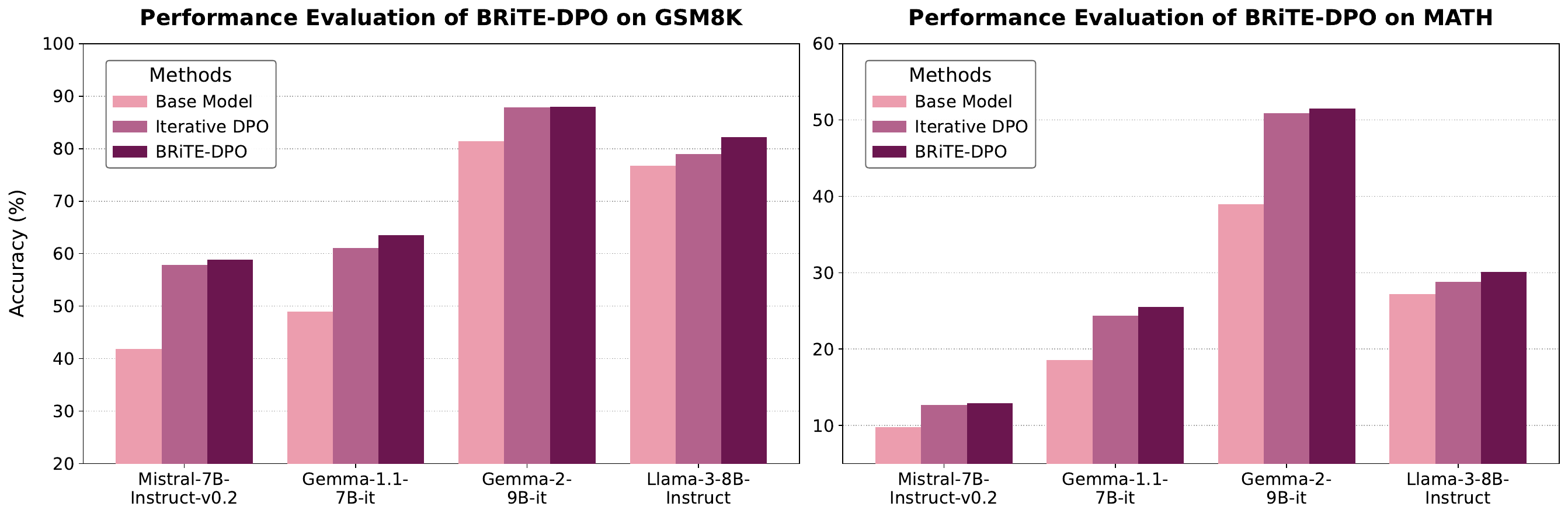}
    \caption{Comparison between BRiTE and iterative DPO in the RLHF stage.}
    \label{fig:enter-label}
\end{figure}

\subsection{Enhanced Math Reasoning via Expanded Dataset and Advanced Base Model}
In the previous subsection, BRiTE's performance did not demonstrate significant advantages over other algorithms like rejection sampling. This was primarily due to two factors: the instruct models had already undergone post-training, and the dataset size was limited. To overcome these constraints, we have implemented a larger training dataset based on the more advanced Qwen base model.

Specifically, we perform BRiTE on the model Qwen2.5-7B \citep{qwen2.5} with a mixed dataset (the size is around 40K) of \texttt{RUC-AIBOX/STILL-3-Preview-RL-Data}\footnote{\url{https://huggingface.co/datasets/RUC-AIBOX/STILL-3-Preview-RL-Data}}, MATH, and historical AIME problems (excluding AIME 2024).  We evaluated models trained using the RS baseline, the $\psi$-update and $\theta$-update of BRiTE on a diverse set of reasoning benchmarks, including six challenging math and science reasoning benchmarks: Math-500, Minerva Math, Olympiad Bench, AIME24, AMC23, and GPQA Diamond. Here MATH500 contains 500 competition-level math problems drawn from the MATH dataset \citep{hendrycks2021measuring}, spanning subjects from algebra to precalculus and accompanied by solutions . Minerva Math comprises 272 undergraduate-level quantitative problems in physics, chemistry, biology, economics, and mathematics, designed to test advanced multi-step reasoning \citep{lewkowycz2022solvingquantitativereasoningproblems}. OlympiadBench \citep{he2024olympiadbench} is a collection of 8,952 Olympiad-caliber problems in math and physics (with 57\% including diagrams) provided in both English and Chinese, each with a detailed solution . Drawing from its extensive entries, we selected 674 open‑ended, competition‑style, text‑only problems from OlympiadBench.  AMC23 (40 problems) and AIME24 (30 problems) are benchmarks based on recent AMC 12 (2023) and AIME (2024)  contests \citep{maa2023amc, maa2024aime}, representing high school-level competition questions of moderate and high difficulty, respectively . Finally, GPQA Diamond \citep{rein2024gpqa} consists of 198 graduate-level scientific questions verified by experts. Each of these datasets challenges models with a different profile of difficulty and domain coverage, collectively evaluating a broad spectrum of mathematical reasoning capabilities.  We evaluate the pass@1 accuracy (64 sampling times for AIME24 and AMC23 and 8 sampling times for the others) for each benchmark.

\begin{table}[t]
\centering
\resizebox{\textwidth}{!}{
\begin{tabular}{ccccccc}
\toprule
\textbf{Method} & \textbf{MATH500} & \textbf{Minerva Math} & \textbf{OlympiadBench} & \textbf{AIME24} & \textbf{AMC23} & \textbf{GPQA Diamond} \\
\midrule
--- & 44.1 & 12.9 & 16.1 & 0.9  & 10.1 & 25.9 \\
RS   & 54.3 & 21.0 & 23.1 & 5.6  & 31.6 & 26.9 \\
BRiTE ($\psi$-update)   & \underline{79.1} & 35.0 & 35.7 & 14.3 & \underline{57.7} & 28.5 \\
BRiTE ($\theta$-update)   & 76.9 & \underline{40.6} & \underline{37.0} & \underline{14.4} & 57.1 & \underline{29.8} \\
BRiTE-iter-2 ($\psi$-update)   & \textbf{80.6} & \textbf{41.3} & 37.3 & 14.3 & \textbf{57.9} & 29.9 \\
BRiTE-iter-2 ($\theta$-update)   & 78.2 & 39.8 & \textbf{37.9} & \textbf{15.3} & 56.4 & \textbf{30.1} \\
\bottomrule
\end{tabular}
}
\caption{Performance comparison across different reasoning benchmarks.}
\label{tab:withver}
\end{table}

\begin{figure}[t]
    \centering
\subfigure{\includegraphics[width=0.438\linewidth]{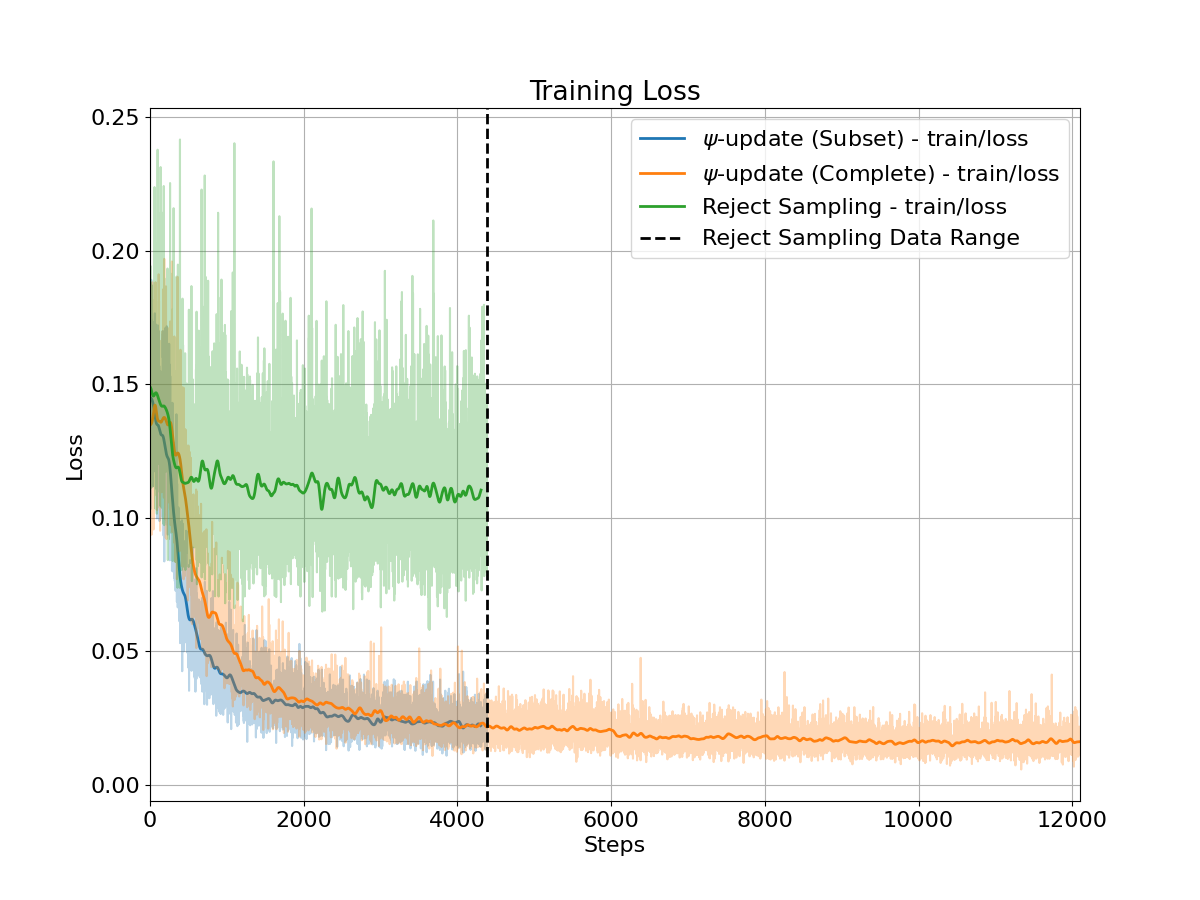}}
\subfigure{\includegraphics[width=0.4\linewidth]{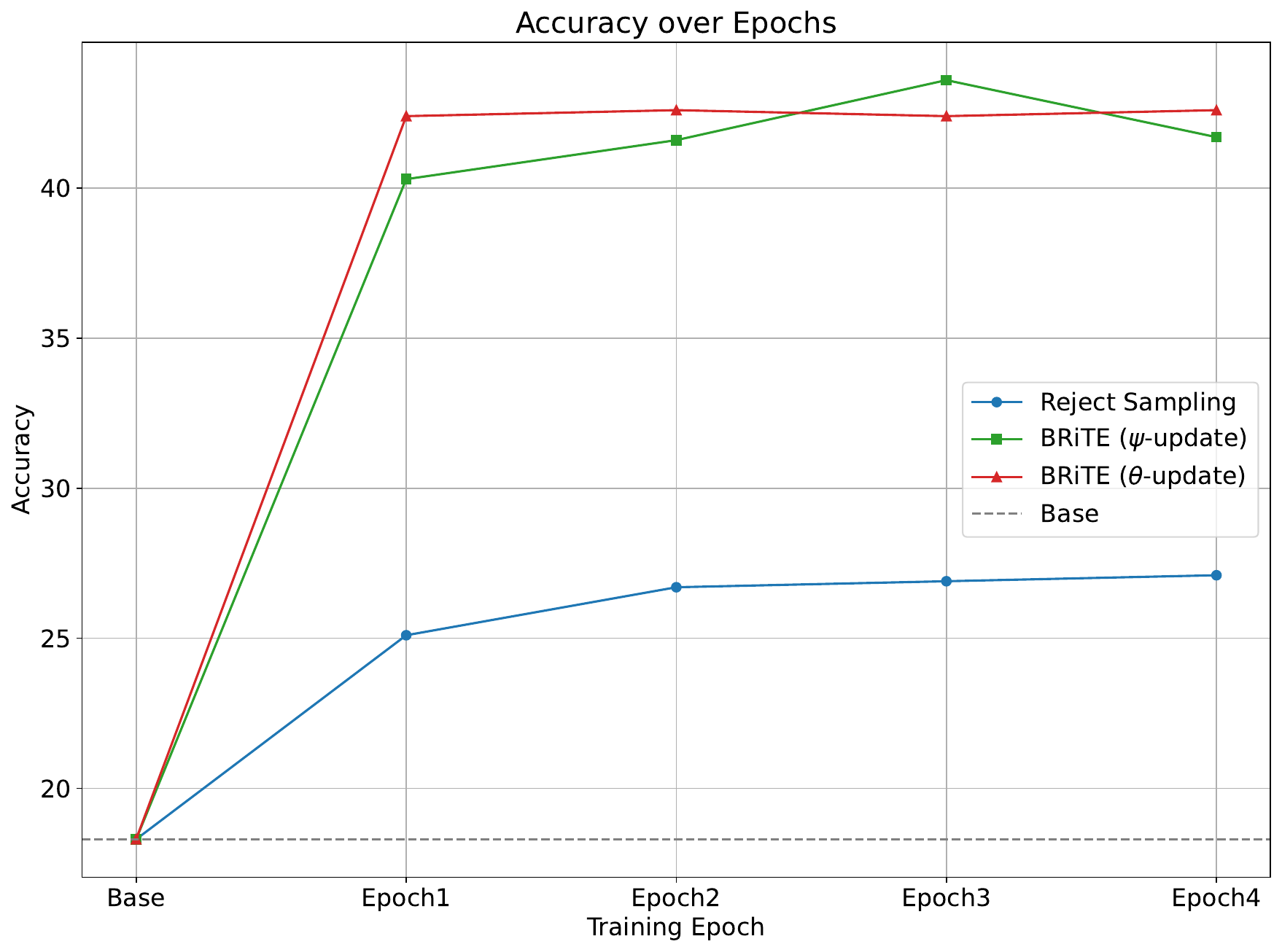}}
     \caption{\textit{Left:} Training dynamics of BRiTE with an external verifier and rejection sampling. \textit{Right:} Mean accuracy of benchmark scores of models trained by BRiTE with an external verifier and reject sampling during the training process.}
         \label{fig:learning_dym}
\end{figure}

As shown in Table~\ref{tab:withver}, BRiTE with an external verifier can improve reject sampling (RS) significantly. Specifically, on MATH500, Minerva Math and AMC23, BRiTE improve upon RS by over 15 points in accuracy, showcasing the strong impact of RL-bootstrapped rationales. Moreover, we also show that the iterative training version of BRiTE leads to some improvements, though the training in iteration 1 reaches a plateau that may limit further gains.
Figure~\ref{fig:learning_dym} (left) shows that BRiTE exhibits faster and more stable training loss reduction compared to rejection sampling, indicating more effective use of feedback during optimization. Figure~\ref{fig:learning_dym} (right) further confirms this trend across benchmarks: BRiTE consistently achieves higher mean accuracy throughout training. These results indicate that BRiTE not only improves final performance, but also accelerates the acquisition of reasoning capabilities during training.

\section{Conclusion}
In this work, we explore methods for enhancing language model reasoning through the automated generation of high-quality thinking processes. We present a unified probabilistic framework that characterizes both the reasoning process and evaluation signals. Within this framework, we develop the Bootstrapping Reinforced Thinking Process (BRiTE) algorithm, which advances automated reasoning generation through reinforcement learning during inference and incorporates improved reasoning processes into post-training phases. Furthermore, we demonstrate that BRiTE possesses a provable convergence property and unifies various existing learning paradigms and algorithms. Extensive empirical results on mathematics tasks show that our approach surpasses traditional chain-of-thought prompting while enhancing existing supervised/rejection sampling fine-tuning and reinforcement learning from human feedback methods. Our work opens several promising research directions, including the application of our framework to broader reasoning tasks and the investigation of its potential for developing more robust and reliable AI systems.

\section*{Acknowledgments}
We would like to thank Wei Xiong and Hang Li for valuable discussions regarding experiments and related works.

\bibliographystyle{ims}
\bibliography{graphbib}

\newpage
\appendix

\section{Author Contributions} \label{app:contribution}

This work stems from the valuable contributions and close collaboration of all authors. The first seven authors and ZW are the core contributors to this project, all participating in discussions of problem formulation and theory, as well as experimental implementation. In particular, HZ and ZW lead the project, primarily propose the methodology, derive the theoretical results, guide experimental progress, contribute to early baselines, and write the paper. YY, SZ, and XX mainly contribute to the initial implementation of BRiTE without verifier reward for Llama, Gemma, and Mistral models, while YL, YZ, and ZL mainly contribute to BRiTE with verifier reward for larger datasets and Qwen base models. In addition, YL, YZ, and ZL make key contributions to direct preference learning, rejection sampling, and code generation, respectively. Other authors also make significant contributions to this work, providing computational resources and offering suggestions for theoretical analysis, experiment design, and paper writing.

\section{Missing Proofs in the Main Paper}

    \subsection{Proof of Lemma \ref{lemma:gibbs}} \label{appendix:pf:gibbs}

    \begin{proof}[Proof of Lemma \ref{lemma:gibbs}]
        For any $q(\cdot) \in \Delta(\cW)$, we have
        \$
        \EE_{w \sim q(\cdot)} [\log P_w - \log q(w) ] - \log \sum_{w \in \cW} P_{w} & = \EE_{w \sim q(\cdot)} \Big[\log \frac{P_w}{\sum_{w' \in \cW} P_{w'}} - \log q(w) \Big]  = - \mathrm{KL}(q \| \tilde{p}) \le 0,
        \$
        where $\tilde{p}$ is the distribution defined as $\tilde{p}(w) = P_w/(\sum_{w' \in \cW} P_{w'})$, and the equality is achieved when $q = \tilde{p}$. Hence, we have finished the proof of Lemma~\ref{lemma:gibbs}.
\end{proof}

\subsection{Proof of Proposition \ref{prop:policy:posterior}} \label{appendix:pf:prop:poesterior}

\begin{proof}[Proof of Proposition \ref{prop:policy:posterior}]
    The first equation follows from the deterministic transition. We will now focus on proving the second propositional relationship. According to equation \eqref{eq:optimal:policy}, we have:
    \$
    \pi^*(a_i \given s_i) = \exp\{ (Q^*(s_i, a_i) - V^*(s_i))/\beta \}, \qquad \forall h \le i \le H, 
    \$
    which implies that 
    \# \label{eq:1}
    \sum_{i=h}^H \beta \log \pi^*(a_i \given s_i) &= \sum_{i = h}^H \big(Q^*(s_i, a_i) - V^*(s_i) \big) \notag \\
    & = \sum_{i = h}^{H-1} \big( r(s_i, a_i) + V^*(s_{i+1}) - V^*(s_i) \big) + \big( r(s_H, a_H) - V^*(s_H) \big) \notag \\
    & = \sum_{i = h}^H r(s_i, a_i) - V^*(s_h),
    \# 
    where the second equality uses the fact that $a_H = \texttt{EOS}$. Hence, we have
    \$
    \prod_{i = h}^H \pi^*(a_i \given s_i) = \frac{\exp\Big(\frac{1}{\beta}\sum_{i = h}^H r(s_i, a_i) \Big)}{\exp(V^*(s_h)/\beta)} \propto \exp \Big( \frac{1}{\beta}\sum_{i = h}^H r(s_i, a_i)\Big),
    \$
    where the last step is obtained by the fact that $s_h$ is a fixed state, independent of $a_h \cup \{(s_i, a_i)\}_{i=h+1}^H$. 
\end{proof}

\section{Convergence Results}

\subsection{Proof of Theorem \ref{thm:covergence}} \label{appendix:pf:convergence}

    Before starting the proof of Theorem~\ref{thm:covergence}, we present two technical lemmas.
    \begin{lemma} \label{lemma:kl}
    For any $(\theta, \theta')$ and fixed $x \in \cX$, we have
        \$
        \mathrm{KL}\big( \PP(\cdot, \cdot \given x, \theta) \| \PP(\cdot, \cdot \given x, \theta') \big) = A(x, \theta') - A(x, \theta) + \la \nabla A(x, \theta), f_\theta - f_{\theta'} \ra.
        \$
    \end{lemma}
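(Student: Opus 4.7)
The plan is to expand the KL divergence directly using the exponential-family form of $\PP(\cdot,\cdot \given x,\theta)$ supplied by Assumption~\ref{assumption:logits}, and then recognize the expected log-likelihood ratio as an RKHS inner product against the Fréchet derivative of the log-partition functional $A$. The proof is essentially a standard exponential-family identity adapted to the RKHS setting; no deep machinery is needed.

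First I would write
\$
\mathrm{KL}\big( \PP(\cdot, \cdot \given x, \theta) \| \PP(\cdot, \cdot \given x, \theta') \big) = \EE_{(z,y) \sim \PP(\cdot,\cdot \given x,\theta)}\big[ \log \PP(z,y \given x,\theta) - \log \PP(z,y \given x,\theta') \big].
\$
Substituting $\log \PP(z,y \given x,\theta) = f_\theta(x,z,y) - A(x,\theta)$ and the analogous identity for $\theta'$, the expectation cleanly decomposes into the constant contribution $A(x,\theta') - A(x,\theta)$ plus the expectation $\EE[f_\theta(x,z,y) - f_{\theta'}(x,z,y)]$ under $\PP(\cdot,\cdot \given x,\theta)$. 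Thus the only remaining task is to identify this expectation with $\la \nabla A(x,\theta),\, f_\theta - f_{\theta'}\ra$.

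For that identification I would view $A$ as a real-valued functional on the RKHS $\cH$ via $A(x,f) = \log \sum_{(z,y)} \exp(f(x,z,y))$, compute its Fréchet derivative in direction $g \in \cH$, and use the reproducing property. A direct calculation gives
\$
\frac{d}{dt}\bigg|_{t=0} A(x, f_\theta + tg) \;=\; \EE_{(z,y) \sim \PP(\cdot,\cdot \given x,\theta)}\big[g(x,z,y)\big] \;=\; \Big\la \EE_{(z,y) \sim \PP(\cdot,\cdot \given x,\theta)}\big[\cK((x,z,y), \cdot)\big],\, g \Big\ra,
\$
where the last equality is the reproducing property $g(x,z,y) = \la \cK((x,z,y), \cdot), g\ra$ combined with linearity of the inner product. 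Consequently $\nabla A(x,\theta) = \EE_{(z,y) \sim \PP(\cdot,\cdot \given x,\theta)}[\cK((x,z,y), \cdot)]$, and applying the reproducing property once more to $f_\theta - f_{\theta'}$ under the same expectation yields the desired inner product representation.

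The only subtle point—and the main thing to be careful about—is the interpretation of $\nabla A(x,\theta)$ as a Hilbert-space gradient (i.e., the Fréchet derivative of $A$ at $f_\theta$) rather than a Euclidean gradient in the parameter $\theta$; once this is pinned down, the rest is a textbook exponential-family computation. No concavity or regularity beyond what is implicit in Assumption~\ref{assumption:logits} is required.
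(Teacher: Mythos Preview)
Your proposal is correct and follows essentially the same route as the paper: expand the KL divergence using the exponential-family form in Assumption~\ref{assumption:logits}, then identify $\nabla A(x,\theta)$ with the expected feature map $\EE_{(z,y)\sim \PP(\cdot,\cdot\given x,\theta)}[\cK((x,z,y),\cdot)]$ via the reproducing property. The only cosmetic difference is that you obtain $\nabla A$ by a directional-derivative computation whereas the paper differentiates the log-sum-exp formula directly; the two computations are equivalent.
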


    \begin{proof}[Proof of Lemma~\ref{lemma:kl}]
        By the definition of KL-divergence, we have
        \# \label{eq:231}
        \mathrm{KL}\big( \PP(\cdot, \cdot \given x, \theta) \| \PP(\cdot, \cdot \given x, \theta') \big) & = \sum_{(z, y) \in \cZ \times \cY} \PP(z, y \given x, \theta) \cdot \log \frac{\PP(z, y \given x, \theta)}{\PP(z, y \given x, \theta')} \notag \\
        & = \sum_{(z, y) \in \cZ \times \cY} \PP(z, y \given x, \theta) \cdot \Big[ f_{\theta}(x, z, y) - f_{\theta'}(x, z, y) + A(x, \theta') - A(x, \theta) \Big] \notag \\ 
        & = A(x, \theta') - A(x, \theta) + \left\la \EE_{(z, y) \sim \PP(\cdot, \cdot \given x, \theta)} [ \cK( (x, z, y), \cdot )], f_{\theta} - f_{\theta'} \right\ra ,
        \#
        where the last equality uses (i) $\sum_{(z, y) \in \cZ \times \cY} \PP(z, y \given x, \theta) = 1$; and (ii) the assumption that $f_\theta \in \cH$ (Assumption~\ref{assumption:logits}). Meanwhile, by Assumption~\ref{assumption:logits}, we have
        \$
        A(x, \theta) = \log \sum_{(z, y) \in \cZ \times \cY} \exp(f_\theta(x, z, y)) =  \log \sum_{(z, y) \in \cZ \times \cY} \exp( \la \cK((x, z, y), \cdot), f_{\theta} \ra,
        \$
        which implies that
        \# \label{eq:232}
        \nabla A(x, \theta) & = \frac{\nabla \Big(\sum_{(z, y) \in \cZ \times \cY} \exp( \la \cK((x, z, y), \cdot), f_{\theta} \ra) \Big)}{\sum_{(z, y) \in \cZ \times \cY} \exp( \la \cK((x, z, y), \cdot), f_{\theta} \ra)} \notag \\
        & = \sum_{(z, y) \in \cZ \times \cY} \frac{\exp( \la \cK((x, z, y), \cdot), f_{\theta} \ra)}{\sum_{(z, y) \in \cZ \times \cY} \exp( \la \cK((x, z, y), \cdot), f_{\theta} \ra)} \cdot \cK((x, z, y), \cdot)  \notag \\
        & = \sum_{(z, y) \in \cZ \times \cY} \PP(z, y \given x, \theta) \cdot \cK( (x, z, y), \cdot ),
        \#
        where $\nabla$ is a functional gradient. Combining \eqref{eq:231} and \eqref{eq:232}, we have
        \$
        \mathrm{KL}\big( \PP(z, y \given x, \theta) \| \PP(z, y \given x, \theta') \big) = A(x, \theta') - A(x, \theta) + \la \nabla A(x, \theta), f_\theta - f_{\theta'} \ra,
        \$
        which finishes the proof of Lemma~\ref{lemma:kl}.
    \end{proof}

    \begin{lemma} \label{lemma:gradient:logprob}
        It holds that
        \$
        \nabla \log \PP(z \in \sZ, y \in \sY, o \in \sO \given x, \theta) = \EE_{(z, y) \sim \QQ(\cdot, \cdot \given x, \theta)}[\cK((x, z, y), \cdot)] - \nabla A(x, \theta),
        \$
        where $\QQ(z, y \given x) = \frac{\sum_{o \in \sO}\PP(z, y, o \given x, \theta)}{\sum_{(z, y, o) \in \sZ \times \sY \times \sO} \PP(z, y, o \given x, \theta)}$.
    \end{lemma}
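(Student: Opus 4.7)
The plan is to reduce the computation to a standard log-derivative calculation, mirroring the manipulations already carried out in Lemma~\ref{lemma:kl}. First I would expand
\$
\PP(z \in \sZ, y \in \sY, o \in \sO \given x, \theta) = \sum_{(z, y, o) \in \sZ \times \sY \times \sO} \PP(z, y \given x, \theta)\cdot \PP(o \given x, z, y),
\$
using the factorization in~\eqref{eq:decompose} and exploiting the crucial fact that $\PP(o \given x, z, y)$ does not depend on $\theta$. This isolates all of the $\theta$-dependence in the first factor.

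Next, I would apply the functional log-derivative identity $\nabla \log F(\theta) = \nabla F(\theta) / F(\theta)$ together with the gradient of $\PP(z, y \given x, \theta)$ implied by Assumption~\ref{assumption:logits}, namely
\$
\nabla \PP(z, y \given x, \theta) = \PP(z, y \given x, \theta) \cdot \bigl( \cK((x, z, y), \cdot) - \nabla A(x, \theta) \bigr),
\$
which is obtained exactly as in the derivation of~\eqref{eq:232} in the proof of Lemma~\ref{lemma:kl}. Substituting gives
\$
\nabla \log \PP(\sZ, \sY, \sO \given x, \theta) = \frac{\sum_{(z, y, o) \in \sZ \times \sY \times \sO} \PP(z, y \given x, \theta)\,\PP(o \given x, z, y)\, \bigl(\cK((x, z, y), \cdot) - \nabla A(x, \theta)\bigr)}{\sum_{(z, y, o) \in \sZ \times \sY \times \sO} \PP(z, y \given x, \theta)\,\PP(o \given x, z, y)}.
\$

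Finally, I would recognize the normalized weights in the ratio above as precisely the distribution $\QQ(\cdot, \cdot \given x)$ defined in the statement (after marginalizing $o$ over $\sO$). Splitting the sum linearly produces $\EE_{(z, y) \sim \QQ(\cdot, \cdot \given x, \theta)}[\cK((x, z, y), \cdot)]$ from the first term, while the second term contributes $-\nabla A(x, \theta)$ (since $\nabla A(x, \theta)$ is independent of $(z, y, o)$ and the $\QQ$-weights sum to one). This yields the claimed identity.

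No genuine obstacle is expected: the argument is essentially bookkeeping combined with the standard exponential-family log-partition gradient identity. The only subtlety is correctly marginalizing out $o$ so that the weights match $\QQ$ as defined, which is handled cleanly because $\PP(o \given x, z, y)$ is $\theta$-independent and therefore just rescales each $(z, y)$ contribution by $\sum_{o \in \sO} \PP(o \given x, z, y)$, which is exactly the factor that appears in the numerator of $\QQ$.
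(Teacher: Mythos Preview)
Your proposal is correct and matches the paper's proof essentially step for step: both expand via the factorization~\eqref{eq:decompose}, apply the log-derivative identity, compute $\nabla \PP(z,y\mid x,\theta)$ from Assumption~\ref{assumption:logits} exactly as in~\eqref{eq:232}, and then identify the resulting normalized weights with $\QQ$. The only cosmetic difference is that the paper delays invoking~\eqref{eq:decompose} until after writing the generic quotient $\sum\nabla\PP/\sum\PP$, whereas you invoke it first; the content is identical.
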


    \begin{proof}
        By definition, we have
    \# \label{eq:nabla:l:1}
    \nabla \log \PP(z \in \sZ, y \in \sY, o \in \sO \given x, \theta) & = \nabla \log \sum_{(z, y, o) \in \sZ \times \sY \times \sO} \PP(z, y, o \given x, \theta) \notag \\
    & = \frac{\sum_{(z, y, o) \in \sZ \times \sY \times \sO} \nabla \PP(z, y, o \given x, \theta)}{\sum_{(z, y, o) \in \sZ \times \sY \times \sO} \PP(z, y, o \given x, \theta)} \notag \\
    & = \frac{\sum_{(z, y, o) \in \sZ \times \sY \times \sO} \big( \PP(o \given x, z, y) \cdot \nabla \PP(z, y \given x, \theta) \big)}{\sum_{(z, y, o) \in \sZ \times \sY \times \sO} \PP(z, y, o \given x, \theta)},
    \#
    where the last equality follows from \eqref{eq:decompose}. Meanwhile, by Assumption~\ref{assumption:logits}, we further have
    \# \label{eq:nabla:l:2}
    \nabla \PP(z, y \given x, \theta) &= \nabla \exp(f_\theta(x, z, y) - A(x, \theta)) \notag \\
    & = \exp(f_\theta(x, z, y) - A(x, \theta)) \cdot [\nabla \la \cK((x, z, y), \cdot), f_\theta \ra - \nabla A(x, \theta)] \notag \\
    & = \PP(z, y \given x, \theta) \cdot [\cK((x, z, y), \cdot) - \nabla A(x, \theta)].
    \#
    Combining \eqref{eq:nabla:l:1} and \eqref{eq:nabla:l:2}, we have
    \# \label{eq:nabla:l:3}
    \nabla \log \PP(z \in \sZ, y \in \sY, o \in \sO \given x, \theta) & =  \frac{\sum_{(z, y, o) \in \sZ \times \sY \times \sO} \big( \PP(o \given x, z, y) \cdot \nabla \PP(z, y\given x, \theta) \big)}{\sum_{(z, y, o) \in \sZ \times \sY \times \sO} \PP(z, y, o \given x, \theta)}  \notag \\
    &= \sum_{(z, y) \in \sZ \times \sY} \frac{\sum_{o \in \sO}\PP(z, y, o \given x, \theta)}{\sum_{(z, y, o) \in \sZ \times \sY \times \sO} \PP(z, y, o \given x, \theta)} \cdot [\cK((x, z, y), \cdot) - \nabla A(x, \theta)] \notag \\
    & = \EE_{(z, y) \sim \QQ(\cdot, \cdot \given x, \theta)}[\cK((x, z, y), \cdot)] - \nabla A(x, \theta).
    \#
    This finishes the proof of Lemma~\ref{lemma:gradient:logprob}.
    \end{proof}

     Now we start the proof of Theorem~\ref{thm:covergence}.
    \begin{proof}[Proof of Theorem \ref{thm:covergence}]
    By the updating rule of $\theta_{t+1}$ in \eqref{eq:update:theta}, we have
    \$
    \theta_{t+1} & = \argmax_{\theta} \Big\{ \sum_{(z, y, o) \in \sZ \times \sY} \log \PP(z, y \given x, \theta) \cdot \QQ(z, y, o \given x, \theta_{t}) \Big\}\\
    & = \argmax_{\theta} \Big\{ \sum_{(z, y) \in \sZ \times \sY} \log \PP(z, y \given x, \theta) \cdot \QQ(z, y \given x, \theta_{t}) \Big\},
    \$
    where we use the notation $\QQ(z, y \given x, \theta_t) = \sum_{o \in \sO} \QQ(z, y, o \given x, \theta_t)$. Furthermore, we have
    \# \label{eq:711}
    \theta_{t+1} & =  \argmax_{\theta} \Big\{ \sum_{(z, y) \in \sZ \times \sY} \log \PP(z, y \given x, \theta) \cdot \QQ(z, y \given x, \theta_{t}) \Big\} \notag \\
    & = \argmax_{\theta} \Big\{ \sum_{(z, y) \in \sZ \times \sY} \log \frac{ \PP(z, y \given x, \theta)}{\PP(z, y \given x, \theta_t)} \cdot \QQ(z, y \given x, \theta_{t}) \Big\} \notag \\
    & = \argmax_{\theta} \Big\{ \left\la \EE_{(z, y) \sim \QQ(\cdot, \cdot \given x, \theta_t)} [\cK((x, z, y), \cdot)], f_{\theta} - f_{\theta_{t}} \right\ra + A(x, \theta_t) - A(x, \theta) \Big\}, 
    \#
    where the last equality is implied by
    \$
    \log \frac{ \PP(z, y \given x, \theta)}{\PP(z, y \given x, \theta_t)} &= f_\theta(x, z, y) - f_{\theta_t}(x, z, y) + A(x, \theta_t) - A(x, \theta) \\
    & = \left\la \cK((x, z, y), \cdot), f_\theta - f_{\theta_t} \right\ra + A(x, \theta_t) - A(x, \theta).
    \$
    Combining \eqref{eq:711} and Lemma~\ref{lemma:gradient:logprob}, we have
    \# \label{eq:712}
    \theta_{t+1} 
    & = \argmax_{\theta} \Big\{ \left\la \nabla \log \PP(z \in \sZ, y \in \sY, o \in \sO \given x, \theta_t) + \nabla A(x, \theta_t), f_{\theta} - f_{\theta_{t}} \right\ra + A(x, \theta_t) - A(x, \theta) \Big\}. 
    \#
    By the optimality condition, this implies that 
    \# \label{eq:optimality:condition}
    \nabla A(\theta_{t+1}) - \nabla A(\theta_t) = \nabla \log \PP(z \in \sZ, y \in \sY, o \in \sO \given x, \theta_t).
    \#
    We have
    \$
    & \left\la  \nabla A(x, \theta_{t+1}) - \nabla A(x, \theta_{t}),   f_{\theta^*} - f_{\theta_{t+1}} \right\ra \\
    & \qquad = \la \nabla A(x, \theta_t), f_{\theta_{t+1}} - f_{\theta_t} \ra  - A(x, \theta_{t+1}) + A(x, \theta_t)  \\
    & \qquad \quad - \la \nabla A(x, \theta_t), f_{\theta^*} - f_{\theta_t} \ra + A(x, \theta^*) - A(x, \theta_t) \\
    & \qquad \quad + \la \nabla A(x, \theta_{t+1}), f_{\theta^*} - f_{\theta_{t+1}} \ra - A(x, \theta^*) + A(x, \theta_{t+1}) \\
    & \qquad = - \mathrm{KL}\big( \PP(\cdot, \cdot \given x, \theta_{t}) \| \PP(\cdot, \cdot \given x, \theta_{t+1}) \big) + \mathrm{KL}\big( \PP(\cdot, \cdot \given x, \theta_{t}) \| \PP(\cdot, \cdot \given x, \theta^*) \big) - \mathrm{KL}\big( \PP(\cdot, \cdot \given x, \theta_{t+1}) \| \PP(\cdot, \cdot \given x, \theta^*) \big) ,
    \$
    where the last equality follows from Lemma~\ref{lemma:kl}. This is equivalent to 
    \# \label{eq:three}
    & \mathrm{KL}\big( \PP(\cdot, \cdot \given x, \theta_{t}) \| \PP(\cdot, \cdot \given x, \theta^*) \big) - \mathrm{KL}\big( \PP(\cdot, \cdot \given x, \theta_{t+1}) \| \PP(\cdot, \cdot \given x, \theta^*) \big) \notag \\
    & \qquad = \la \nabla A(\theta_{t+1}) - \nabla A(\theta_t), f_{\theta^*} - f_{\theta_{t+1}} \ra + \mathrm{KL}\big( \PP(\cdot, \cdot \given x, \theta_{t}) \| \PP(\cdot, \cdot \given x, \theta_{t+1}) \big) \notag \\
    & \qquad = \underbrace{\la \nabla \log \PP(z \in \sZ, y \in \sY, o \in \sO \given x, \theta_t), f_{\theta^*} - f_{\theta_{t+1}} \ra}_{(\star)} + \mathrm{KL}\big( \PP(\cdot, \cdot \given x, \theta_{t}) \| \PP(\cdot, \cdot \given x, \theta_{t+1}) \big),
    \#
    where the last equality follows from \eqref{eq:optimality:condition}.
    For the Term $(\star)$, we have
    \# \label{eq:term:star}
    (\star) & = \left\la \nabla \log \PP(z \in \sZ, y \in \sY, o \in \sO \given x, \theta_t) , f_{\theta^*}  - f_{\theta_{t+1}} \right\ra \notag \\
    & = \underbrace{\left\la \nabla \log \PP(z \in \sZ, y \in \sY, o \in \sO \given x, \theta_t) , f_{\theta^*}  - f_{\theta_{t}} \right\ra}_{\mathrm{(I)}} + \underbrace{\left\la \nabla \log \PP(z \in \sZ, y \in \sY, o \in \sO \given x, \theta_t) , f_{\theta_t}  - f_{\theta_{t+1}} \right\ra}_{\mathrm{(II)}} .
    \# 
    \paragraph{Term (I).} {By the local concave assumption,} we have
    \# \label{eq:term:1}
    \text{Term (I)} \text{ in } \eqref{eq:term:star} \ge \log \PP(z \in \sZ, y \in \sY, o \in \sO \given x, \theta^*) - \log \PP(z \in \sZ, y \in \sY, o \in \sO \given x, \theta_t).
    \#
    \paragraph{Term (II).} By Lemma~\ref{lemma:gibbs}, we have
    \begin{equation}
        \begin{aligned} \label{eq:l:t+1}
            & \log \PP(z \in \sZ, y \in \sY, o \in \sO \given x, \theta_{t+1}) \\
            & \qquad = \sum_{(z, y, o) \in \sZ \times \sY \times \sO} [ \log \PP(z, y, o \given x, \theta_{t+1}) - \log \QQ(z, y, o \given x, \theta_{t+1}) ] \cdot \QQ(z, y, o \given x, \theta_{t}),
        \end{aligned}
    \end{equation}
    where 
    \# \label{eq:recall:Q}
    \QQ(z, y, o \given x, \theta) = \frac{\PP(z, y, o \given x, \theta)}{\sum_{(z', y', o') \in \sZ \times \sY \times \sO} \PP(z', y', o' \given x, \theta) } = \frac{\PP(z, y, o \given x, \theta)}{ \PP(z \in \sZ, y \in \sY, o \in \sO \given x, \theta) }.
    \#
    Note that 
    $\QQ(\cdot, \cdot, \cdot \given x, \theta_{t})$ is a distribution over $\sZ \times \sY \times \sO$, which means that $\sum_{(z, y, o) \in \sZ \times \sY \times \sO} \QQ(z, y, o \given x, \theta_{t}) = 1$. Hence, we have
    \# \label{eq:l:t}
    \log \PP(z \in \sZ, y \in \sY, o \in \sO \given x, \theta_t) & = \sum_{(z, y, o) \in \sZ \times \sY \times \sO} \QQ(z, y, o \given x, \theta_{t}) \cdot \log \PP(z \in \sZ, y \in \sY, o \in \sO \given x, \theta_t) \notag \\
    & = \sum_{(z, y, o) \in \sZ \times \sY \times \sO} \QQ(z, y, o \given x, \theta_{t}) \cdot \log \frac{\PP(z, y, o \given x, \theta_t)}{\QQ(z, y, o \given x, \theta_t)},
    \#
    where the last equality is implied by the definition of $\QQ(z, y, o \given x, \theta)$ in \eqref{eq:recall:Q}. Combining \eqref{eq:l:t+1} and \eqref{eq:l:t}, we have
    \# \label{eq:diff}
    & \log \PP(z \in \sZ, y \in \sY, o \in \sO \given x, \theta_{t+1}) - \log \PP(z \in \sZ, y \in \sY, o \in \sO \given x, \theta_t) \notag \\
    & \qquad = \sum_{(z, y, o) \in \sZ \times \sY \times \sO} \QQ(z, y, o \given x, \theta_{t}) \cdot \log \frac{\PP(z, y, o \given x, \theta_{t+1})}{\PP(z, y, o \given x, \theta_t)} + \mathrm{KL} \Big( \QQ(\cdot, \cdot, \cdot \given x, \theta_{t}) \| \QQ(\cdot, \cdot, \cdot \given x, \theta_{t+1}) \Big) \notag \\
    & \qquad \ge \sum_{(z, y, o) \in \sZ \times \sY \times \sO} \QQ(z, y, o \given x, \theta_{t}) \cdot \log \frac{\PP(z, y, o \given x, \theta_{t+1})}{\PP(z, y, o \given x, \theta_t)} \notag \\
    & \qquad = \sum_{(z, y, o) \in \sZ \times \sY \times \sO} \QQ(z, y, o \given x, \theta_{t}) \cdot \log \frac{\PP(z, y \given x, \theta_{t+1})}{\PP(z, y \given x, \theta_t)},
    \#
    where the third equality follows from the non-negativity of KL-divergence, and the last equality is implied by \eqref{eq:decompose}. By Assumption~\ref{assumption:logits}, we can rewrite the right-handside of \eqref{eq:diff} as
    \# \label{eq:diff:2}
    &\sum_{(z, y, o) \in \sZ \times \sY} \QQ(z, y, o \given x, \theta_{t}) \cdot \big(f_{\theta_{t+1}}(x, z, y) - f_{\theta_{t}}(x, z, y) \big) + A(x, \theta_t) - A(x, \theta_{t+1}) \notag \\
    & \qquad = \left\la \EE_{(z, y) \sim \QQ(\cdot, \cdot \given x, \theta_{t})} [\cK((x, z, y), \cdot)], f_{\theta_{t+1}} - f_{\theta_{t}} \right\ra + A(x, \theta_t) - A(x, \theta_{t+1}) \notag \\
    & \qquad = \left\la \EE_{(z, y) \sim \QQ(\cdot, \cdot \given x, \theta_{t})} [\cK((x, z, y), \cdot)] - \nabla A(x, \theta_{t}) , f_{\theta_{t+1}} - f_{\theta_{t}} \right\ra - \mathrm{KL}\big( \PP(\cdot, \cdot \given x, \theta_{t}) \| \PP(\cdot, \cdot \given x, \theta_{t+1}) \big) \notag \\
    & \qquad = \left\la \nabla \log \PP(z \in \sZ, y \in \sY, o \in \sO \given x, \theta_t), f_{\theta_{t+1}} - f_{\theta_{t}} \right\ra - \mathrm{KL}\big( \PP(\cdot, \cdot \given x, \theta_{t}) \| \PP(\cdot, \cdot \given x, \theta_{t+1}) \big) ,
    \#
    where the third line is implied by Lemma~\ref{lemma:kl}, and the last equality follows from \eqref{eq:nabla:l:3}.
    Plugging \eqref{eq:diff:2} into \eqref{eq:diff}, we have
    \# \label{eq:term:2}
     \text{Term (II)} \text{ in } \eqref{eq:term:star} 
    & =  \left\la \nabla \log \PP(z \in \sZ, y \in \sY, o \in \sO \given x, \theta_t) , f_{\theta_t}  - f_{\theta_{t+1}} \right\ra \notag \\
    &  \ge \log \PP(z \in \sZ, y \in \sY, o \in \sO \given x, \theta_t) - \log \PP(z \in \sZ, y \in \sY, o \in \sO \given x, \theta_{t+1}) \notag \\
    & \qquad - \mathrm{KL}\big( \PP(\cdot, \cdot \given x, \theta_t) \| \PP(\cdot, \cdot \given x, \theta_{t+1}) \big). 
    \#

    Combining \eqref{eq:term:star}, \eqref{eq:term:1}, and \eqref{eq:term:2}, we have
    \# \label{eq:star:bound}
    (\star) &\ge \text{Term (I)} + \text{Term (II)} \notag \\
    & \ge \log \PP(z \in \sZ, y \in \sY, o \in \sO \given x, \theta^*) - \log \PP(z \in \sZ, y \in \sY, o \in \sO \given x, \theta_t) + \log \PP(z \in \sZ, y \in \sY, o \in \sO \given x, \theta_t) \notag \\
    & \qquad   - \log \PP(z \in \sZ, y \in \sY, o \in \sO \given x, \theta_{t+1}) - \mathrm{KL}\big( \PP(\cdot, \cdot \given x, \theta_t) \| \PP(\cdot, \cdot \given x, \theta_{t+1}) \big) \notag \\
    & = \log \PP(z \in \sZ, y \in \sY, o \in \sO \given x, \theta^*) - \log \PP(z \in \sZ, y \in \sY, o \in \sO \given x, \theta_{t+1}) \notag \\
    & \qquad - \mathrm{KL}\big( \PP(\cdot, \cdot \given x, \theta_t) \| \PP(\cdot, \cdot \given x, \theta_{t+1}) \big).
    \#
    Putting \eqref{eq:three} and \eqref{eq:star:bound} together, we obtain
    \$
    & \mathrm{KL}\big( \PP(\cdot, \cdot \given x, \theta_{t}) \| \PP(\cdot, \cdot \given x, \theta^*) \big) - \mathrm{KL}\big( \PP(\cdot, \cdot \given x, \theta_{t+1}) \| \PP(\cdot, \cdot \given x, \theta^*) \big)  \\
    & \qquad \ge \log \PP(z \in \sZ, y \in \sY, o \in \sO \given x, \theta^*) - \log \PP(z \in \sZ, y \in \sY, o \in \sO \given x, \theta_{t+1}) .
    \$ 
    Telescoping this inequality from $0$ to $T-1$, we obtain that
    which implies that
    \$
    \min_{1 \le t \le T} \left\{ \log \PP(z \in \sZ, y \in \sY, o \in \sO \given x, \theta^*) - \log \PP(z \in \sZ, y \in \sY, o \in \sO \given x, \theta_{t}) \right\} \le \frac{\mathrm{KL}\big( \PP(\cdot, \cdot \given x, \theta_{1}) \| \PP(\cdot, \cdot \given x, \theta^*) \big)}{T},
    \$
    which finishes the proof of Theorem~\ref{thm:covergence}.
\end{proof}

\subsection{Additional Theoretical Results} \label{appendix:additional:theory}

\begin{theorem} \label{thm:stationary:point}
    Suppose Assumption~\ref{assumption:logits} holds. Then we have 
    \$
    \min_{1 \le t \le T} \mathrm{KL}\big( \PP(\cdot, \cdot \given x, \theta_{t+1}) \| \PP(\cdot, \cdot \given x, \theta_t) \big) \le \frac{\log \PP(z \in \sZ, y \in \sY, o \in \sO \given x, \theta_{T+1}) - \log \PP(z \in \sZ, y \in \sY, o \in \sO \given x, \theta_1)}{T}.
    \$
\end{theorem}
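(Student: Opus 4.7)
} The plan is to establish the classical EM-style monotone improvement bound for this framework, namely that each iteration increases the log-likelihood by at least the KL divergence between consecutive iterates, and then to telescope. Crucially, unlike Theorem~\ref{thm:covergence}, this argument will not require the concavity assumption on $\cL$, which is precisely why it yields only a stationary-point guarantee.

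First, I would reuse the chain of manipulations already carried out in the proof of Theorem~\ref{thm:covergence} through equations \eqref{eq:diff} and \eqref{eq:diff:2}, which give the one-step lower bound
\$
\log \PP(z \in \sZ, y \in \sY, o \in \sO \given x, \theta_{t+1}) - \log \PP(z \in \sZ, y \in \sY, o \in \sO \given x, \theta_t) \ge \la \nabla \log \PP(z \in \sZ, y \in \sY, o \in \sO \given x, \theta_t), f_{\theta_{t+1}} - f_{\theta_t} \ra - \mathrm{KL}\bigl( \PP(\cdot, \cdot \given x, \theta_t) \,\|\, \PP(\cdot, \cdot \given x, \theta_{t+1}) \bigr).
\$
Notably, this step uses only Lemma~\ref{lemma:gradient:logprob}, Assumption~\ref{assumption:logits}, and the non-negativity of a KL term, not concavity.

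The key new step is to simplify the right-hand side into a pure KL divergence. I would apply Lemma~\ref{lemma:kl} in both directions, i.e.\ with the roles of $\theta_t$ and $\theta_{t+1}$ swapped, and add the two identities. The $A$-terms cancel and one obtains the symmetric identity
\$
\mathrm{KL}\bigl( \PP(\cdot, \cdot \given x, \theta_{t+1}) \,\|\, \PP(\cdot, \cdot \given x, \theta_t) \bigr) + \mathrm{KL}\bigl( \PP(\cdot, \cdot \given x, \theta_t) \,\|\, \PP(\cdot, \cdot \given x, \theta_{t+1}) \bigr) = \la \nabla A(x, \theta_{t+1}) - \nabla A(x, \theta_t), f_{\theta_{t+1}} - f_{\theta_t} \ra.
\$
Plugging in the optimality condition \eqref{eq:optimality:condition}, which identifies $\nabla A(x, \theta_{t+1}) - \nabla A(x, \theta_t)$ with $\nabla \log \PP(z \in \sZ, y \in \sY, o \in \sO \given x, \theta_t)$, and subtracting one KL term from each side, transforms the one-step bound above into the clean monotone improvement inequality
\$
\log \PP(z \in \sZ, y \in \sY, o \in \sO \given x, \theta_{t+1}) - \log \PP(z \in \sZ, y \in \sY, o \in \sO \given x, \theta_t) \ge \mathrm{KL}\bigl( \PP(\cdot, \cdot \given x, \theta_{t+1}) \,\|\, \PP(\cdot, \cdot \given x, \theta_t) \bigr).
\$

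Finally, I would telescope this inequality from $t=1$ to $T$ to conclude that the sum of the $T$ consecutive KL divergences is upper bounded by $\log \PP(z \in \sZ, y \in \sY, o \in \sO \given x, \theta_{T+1}) - \log \PP(z \in \sZ, y \in \sY, o \in \sO \given x, \theta_1)$, and then bound the minimum by the average. The main obstacle is really just the careful symmetric application of Lemma~\ref{lemma:kl} combined with the optimality condition to collapse the mixed inner-product/KL expression into a single KL; the rest is a direct recycling of machinery from the proof of Theorem~\ref{thm:covergence}.
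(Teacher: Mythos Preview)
Your proposal is correct and follows essentially the same approach as the paper: both start from the one-step bound derived in \eqref{eq:diff}--\eqref{eq:diff:2}, invoke the optimality condition \eqref{eq:optimality:condition}, and apply Lemma~\ref{lemma:kl} in both directions to collapse the inner-product term into $\mathrm{KL}\bigl(\PP(\cdot,\cdot\given x,\theta_{t+1})\,\|\,\PP(\cdot,\cdot\given x,\theta_t)\bigr)$, then telescope. Your packaging of the two applications of Lemma~\ref{lemma:kl} into the single symmetric Bregman identity is a slightly cleaner presentation, but the underlying argument is identical to the paper's.
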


\begin{proof}[Proof of Theorem \ref{thm:stationary:point}]
    By the same derivation of \eqref{eq:term:2}, we have
    \# \label{eq:3331}
    \mathrm{KL}\big( \PP(\cdot, \cdot \given x, \theta_{t}) \| \PP(\cdot, \cdot \given x, \theta_{t+1}) \big) & \ge \log \PP(z \in \sZ, y \in \sY, o \in \sO \given x, \theta_t) - \log \PP(z \in \sZ, y \in \sY, o \in \sO \given x, \theta_{t+1})  \notag \\
    & \qquad - \left\la \nabla \log \PP(z \in \sZ, y \in \sY, o \in \sO \given x, \theta_t) , f_{\theta_t}  - f_{\theta_{t+1}} \right\ra  \notag \\
    & = \log \PP(z \in \sZ, y \in \sY, o \in \sO \given x, \theta_t) - \log \PP(z \in \sZ, y \in \sY, o \in \sO \given x, \theta_{t+1})  \notag \\
    & \qquad - \left\la \nabla A(\theta_{t+1}) - \nabla A(\theta_t), f_{\theta_t}  - f_{\theta_{t+1}} \right\ra,
    \#
    where the last equality is implied by the optimality condition in \eqref{eq:optimality:condition}. By Lemma~\ref{lemma:kl}, we have
    \# \label{eq:3332}
    \mathrm{KL}\big( \PP(\cdot, \cdot \given x, \theta_t) \| \PP(\cdot, \cdot \given x, \theta_{t+1}) \big) = A(x, \theta_{t+1}) - A(x, \theta_t) + \la \nabla A(x, \theta_t), f_{\theta_t} - f_{\theta_{t+1}} \ra.
    \#
    Combining \eqref{eq:3331} and \eqref{eq:3332}, we have
    \$
    & A(x, \theta_{t+1}) - A(x, \theta_t)  + \left\la \nabla A(\theta_{t+1}), f_{\theta_t}  - f_{\theta_{t+1}} \right\ra \\
    & \qquad \ge \log \PP(z \in \sZ, y \in \sY, o \in \sO \given x, \theta_t) - \log \PP(z \in \sZ, y \in \sY, o \in \sO \given x, \theta_{t+1}).
    \$
    Together with Lemma~\ref{lemma:kl}, we have
    \$
    & \log \PP(z \in \sZ, y \in \sY, o \in \sO \given x, \theta_{t+1}) - \log \PP(z \in \sZ, y \in \sY, o \in \sO \given x, \theta_t) \\
    & \qquad \ge  A(x, \theta_t) - A(x, \theta_{t+1}) + \left\la \nabla A(\theta_{t+1}), f_{\theta_{t+1}} - f_{\theta_t}  \right\ra = \mathrm{KL}\big( \PP(\cdot, \cdot \given x, \theta_{t+1}) \| \PP(\cdot, \cdot \given x, \theta_{t}) \big).
    \$
    Telescoping this inequality across $t \in [T]$, we have
    \$
    \min_{t \in [T]}\left\{ \mathrm{KL}\big( \PP(\cdot, \cdot \given x, \theta_{t+1}) \| \PP(\cdot, \cdot \given x, \theta_{t}) \big) \right\} & \le \frac{\sum_{t = 1}^T \mathrm{KL}\big( \PP(\cdot, \cdot \given x, \theta_{t+1}) \| \PP(\cdot, \cdot \given x, \theta_{t}) \big)}{T} \\
    & \le \frac{\log \PP(z \in \sZ, y \in \sY, o \in \sO \given x, \theta_{T+1}) - \log \PP(z \in \sZ, y \in \sY, o \in \sO \given x, \theta_1)}{T},
    \$
    which finishes the proof of Theorem~\ref{thm:stationary:point}.
\end{proof}

\section{Implementation Details}

\subsection{Implementation Details for Math Task}

\paragraph{Implementation of BRiTE.} The BRiTE algorithm is run on 4 NVIDIA H100 during training. We leverage the PPO pipeline \citep{schulman2017proximal} to learn the sampling policy $\QQ$ \eqref{eq:update:psi} with a learning rate of $5e-7$ and a  batch size of $1$. For the subsequent SFT on rationales sampled by $\QQ$, we set the learning rate to $5e-5$ and the batch size to $2$. We adopt the LoRA \citep{hu2021loralowrankadaptationlarge} training for both steps, where the \texttt{r} is set to $32$ and  \texttt{lora alpha} is set to 128. 

\paragraph{Implementation of RS.} For rejection sampling, we set the temperature of the model's generation to $1.0$, sample $N = 30$ candidate rationales for each problem $x$, and select the best rationales $z_{\text{rs}}$. we filter generations by selecting those that produce the correct final answer. The model is then fine-tuned on these problem-rationale-answer tuples $\{x, z_{\text{rs}}, y^*\}$ using the same learning rate and LoRA parameters as in BRiTE.

\paragraph{Implementation of SFT.} 
For SFT on original datasets with human-annotated rationales, we use the same LoRA parameters as in BRiTE. The learning rates for \texttt{Mistral-7B-Instruct-v0.2} and \texttt{Gemma-1.1-7B-it} are set to $5e{-5}$. For \texttt{Gemma-2-9b-it}, the learning rate is $5e{-6}$, and for \texttt{Llama-3-8B-Instruct}, it is $8e{-7}$. These smaller learning rates are chosen to mitigate overfitting and ensure optimal performance.

\paragraph{Implementation of RLHF Algorithms.}
For both BRiTE DPO and iterative DPO, we select 6k data points per iteration from the \texttt{RLHF4MATH/prompt\_iter1,2,3} dataset\footnote{\url{https://huggingface.co/RLHF4MATH}} over three iterations. 
%\texttt{Gemma-1.1-7B-it}, \texttt{Llama-3-8B-Instruct}, and \texttt{Mistral-7B-Instruct-v0.2} use 6k data points for DPO training, while \texttt{Gemma-2-9B-it} adopts 4k data points for training.
 The learning rates for DPO training are configured as follows: \texttt{Mistral-7B-Instruct-v0.2} uses a learning rate of $2 e-7$, \texttt{Gemma-1.1-7B-it} employs $4 e-7$, while \texttt{Gemma-2-9B-it} and \texttt{Llama-3-8B-Instruct} are set to $5 e-7$. The generation temperature is set to $1.0$, and each prompt sample $N = 30$ responses.

\paragraph{Evaluation.} We utilize the evaluation function provided in the \texttt{Qwen2.5-Math} repository\footnote{\url{https://github.com/QwenLM/Qwen2.5-Math}} to assess the models' performance on the test sets of GSM8K and MATH.

\subsection{Implementation Details for Coding Generation Tasks}\label{app:code}

\paragraph{Datasets.} For the code generation task, We choose the first 4000 rows of the \texttt{educational instruct} split of the dataset \texttt{OpenCoder-LLM/opc-sft-stage2}\footnote{\url{https://huggingface.co/datasets/OpenCoder-LLM/opc-sft-stage2/tree/main}}\citep{huang2024opencoderopencookbooktoptier} as the training dataset, which comprises (instruction, code, test case) tuples. The entire dataset is generated with an algorithmic corpus as a seed and  validated through a Python compiler.

\paragraph{Models.} We choose the language model \texttt{deepseek-ai/deepseek-coder-6.7b-instruct}\footnote{\url{https://huggingface.co/deepseek-ai/deepseek-coder-6.7b-instruct}} with around seven billion active parameters as the base model, which is especially pretrained and fined-tuned on the code corpus for better code generation performance.

\paragraph{Baselines.} Similar to the mathematics tasks, we use Reject Sampling (RS) and SFT as the baselines. We follow a rejection sampling approach similar to prior math section. The model generates candidate rationales with a temperature of \(1.0\), sample \(N=30\) rationales for problem \(x\). The best rationales, \(z_{\text{rs}}\), is then selected. The model is then fine-tuned on these rationale-answer tuples \(\{x,z_{\text{rs}},y\}\). Notably, unlike the mathematical setting, filtering correct generations in the code generation task requires candidates to pass all unit tests and receive positive compiler feedback.

\paragraph{Benchmarks.} We evaluate the trained models on the HumanEval task from Evalplus \citep{evalplus} and the instruct split of the BigCodeBench \citep{zhuo2024bigcodebench}. These popular benchmarks evaluate how the language models complete the partial code snippet and generate a full code snippet according to natural language instructions.

\paragraph{Training Details.}
We use 4 NVIDIA A100 GPUs for all the training. We leverage the PPO pipeline \citep{schulman2017proximal} to learn the sampling policy $\QQ$ \eqref{eq:update:psi} with a learning rate of $5e-7$ and a  batch size of $1$. With the rationales sampled from $\QQ$ of BRiTE, we set the learning rate as $5e-5$ and the batch size as $2$ in the SFT stage of BRiTE. For the baselines, we also set the learning rate as $5e-5$ and the batch size as $2$ in the SFT algorithm. For the fine-tune stage of reject sampling, we use the same learning rate as in BRiTE. To save the computation budget, we adopt the LoRA \citep{hu2021loralowrankadaptationlarge} training for all the optimization phases, where the \texttt{r} is set to $32$ and  \texttt{lora alpha} is set to 128. 

\paragraph{Evaluation Details and Results.}
We report the pass@1 scores (the success rate of the first attempt) with the greedy decoding for the base model, SFT algorithm, Reject Sampling (RS) algorithm, and our proposed algorithm BRiTE-SFT in Table \ref{tab:code}. Results show that our proposed algorithm BRiTE can help LLMs improve their code generation ability and demonstrate the performance gain of BRiTE compared with the baselines. We also remark that rejection sampling-based algorithms require the code datasets to be equipped with correct unit tests, which are unnecessary for BRiTE.

\subsection{Implentation Details for the scaling-up Experiments}
\paragraph{Implementation of BRiTE.} We train the BRiTE algorithm on eight NVIDIA H100 GPUs using the GRPO pipeline \citep{shao2024deepseekmathpushinglimitsmathematical} in the \texttt{verl} codebase \citep{sheng2024hybridflow}, learning the sampling policy $\QQ$ (\ref{eq:update:psi}) with a learning rate of $1e-6$ and a batch size of $512\times8$ (8 rollouts per prompt).
For the subsequent SFT $\psi$-update on rationales sampled by $\QQ$, we set the learning rate to $1e-6$ and bath size to be $8$. We also adapt LoRA and set \texttt{r} to be $32$ and \texttt{lora alpha} to be $128$.

\paragraph{Implementation of RS.} For rejection sampling, we set the temperature of the model's generation to $1.0$, sample $N = 2$ candidate rationales for each problem $x$, and select the best rationales $z_{\text{rs}}$. we filter generations by selecting those that produce the correct final answer. The model is then fine-tuned on these problem-rationale-answer tuples $\{x, z_{\text{rs}}, y^*\}$ using the same learning rate and LoRA parameters as in BRiTE.

\end{document}